\newtheorem{theorem}{Theorem}[section]
\newtheorem{lemma}[theorem]{Lemma}
\newcommand{\eat}[1]{}
\newenvironment{definition}[1][Definition]{\begin{trivlist}
\item[\hskip \labelsep {\bfseries #1}]}{\end{trivlist}}
\DeclareMathOperator*{\argmax}{argmax}
\DeclareMathOperator*{\argmin}{argmin}
\def \w {{\mathbf{w}}}
\def \x {{\mathbf{x}}}
\def \y {{\mathbf{y}}}
\def \z {{\mathbf{z}}}
\def \bv {\mathbf{v}}
\def \by {\mathbf{y}}
\def \bmu {{\boldsymbol\mu}}
\def \E {{\mathbb{E}}}
\def \X {{\mathcal{X}}}
\def \K {{\mathcal{K}}}
\def \bzeta {\boldsymbol{\zeta}}
\def \bzetaL {\bzeta^{(L)}}
\def \bzetaLt {\bzeta^{(L_{\tau})}}
\def \bzetaLS {\bzeta^{(L \cup S)}}
\def \lbzetaL {l\left(\bzetaL \right)}
\def \lbzetaLS {l\left(\bzetaLS \right)}
\def \gradljbzetaL {\nabla l_{j} \left(\bzetaL \right)}
\def \R {\mathbb{R}}
\def \bonej {\mathbf{1}^{(\{j\})}}
\def \bv {\mathbf{v}}
\def \x {\mathbf{x}}
\def \by {\mathbf{y}}
\def \y {\mathbf{y}}
\def \w {\mathbf{w}}
\def \bzeta {\boldsymbol{\zeta}}
\def \bmu {\boldsymbol{\mu}}
\def \bzetaL {\bzeta^{(L)}}
\def \bzetaLS {\bzeta^{(L \cup S)}}
\def \lbzetaL {l\left(\bzetaL \right)}
\def \lbzetaLS {l\left(\bzetaLS \right)}
\def \lbzero {l\left(\mathbf{0} \right)}
\def \gradljbzetaL {\nabla l_{j} \left(\bzetaL \right)}
\def \gradlSbzetaL {\nabla l_{S} \left(\bzetaL \right)}
\def \gradljpbzetaL {\nabla l_{j}^+\left(\bzetaL \right)}
\def \gradlpbzero {\nabla l_{p} \left( \mathbf{0} \right)}
\def \gradlibzero {\nabla l_{i} \left( \mathbf{0} \right)}
\def \gradljpbzero {\nabla l_{j}^+\left(\mathbf{0} \right)}
\def \gradlppbzero {\nabla l_{p}^+\left(\mathbf{0} \right)}
\def \gradljbzetaLtau {\nabla l_{j}\left(\bzeta^{\left(L_{\tau}\right)}\right)}
\def \gradljpbzetaLtau {\nabla l_{j}^+\left(\bzeta^{\left(L_{\tau}\right)}\right)}
\def \R {\mathbb{R}}
\def \bonej {\mathbf{1}^{(\{j\})}}
\def \byS {\by^{(S)}}
\numberwithin{equation}{section}
\title{Streaming Methods for Restricted Strongly Convex Functions with Applications to Prototype Selection}
\author[1]{Karthik S. Gurumoorthy \thanks{gurumoor@amazon.com}}
\author[2]{Amit Dhurandhar \thanks{adhuran@us.ibm.com}}
\affil[1]{Amazon Development Center, Bangalore, India}
\affil[2]{AI Foundations, IBM Research, New York, USA}
\date{ }
\begin{document}
\maketitle
\begin{abstract}
In this paper, we show that if the optimization function is restricted-strongly-convex (RSC) and restricted-smooth (RSM) -- a rich subclass of weakly submodular functions -- then a streaming algorithm with constant factor approximation guarantee is possible. More generally, our results are applicable to any monotone weakly submodular function with submodularity ratio bounded from above. This (positive) result which provides a sufficient condition for having a constant factor streaming guarantee for weakly submodular functions may be of special interest given the recent negative result \cite{weakstream} for the general class of weakly submodular functions. We apply our streaming algorithms for creating compact synopsis of large complex datasets, by selecting $m$ representative elements, by optimizing a suitable RSC and RSM objective function. Above results hold even with additional constraints such as learning non-negative weights, for interpretability \cite{proto,nmf}, for each selected element indicative of its importance. We empirically evaluate our algorithms on two real datasets: MNIST- a handwritten digits dataset and Letters- a UCI dataset containing the alphabet written in different fonts and styles. We observe that our algorithms are orders of magnitude faster than the state-of-the-art streaming algorithm for weakly submodular functions and with our main algorithm still providing equally good solutions in practice.
\end{abstract}

\section{Introduction}
Extracting compact synopses of large data sets or important features are a vital tool for summarizing, understanding, explaining and manipulating large datasets and large, complex machine learning models \cite{Kim16,proto}. Besides interpretability and human understanding, such synopses equally enable outlier detection, retaining information in lifelong learning systems, scaling deep learning, transfer learning and obtaining quick performance estimates for autoML systems \cite{automl}. These applications demand fast yet accurate and reliable algorithms for synopsis generation that can flexibly adapt to user and application demands and are robust to uncertainties in the data. These approaches can be unified as finding a subset $S$ out of a collection $V$ of items (data points, features, etc.) that maximize a scoring function $f(S)$. The scoring function measures the information, relevance and quality of the selection. The desiderata for the scoring function naturally imply notions of \emph{diminishing returns}: for any two sets $S \subset T \subset V$ and any item $i \notin T$, it holds that $f(S \cup \{i\}) - f(S) \geq f(T \cup \{i\}) - f(T)$. This is the definition of \emph{submodularity} \cite{fujishige05,lo83}.

In this paper, we provide two streaming algorithms for selecting such high value elements from data streams or large complex datasets. We also learn non-negative weights for each of them indicative of their importance. The non-negativity makes the weights more interpretable, as many domain experts find negative weights hard to interpret \cite{nmf,proto}. Our first streaming algorithm, ProtoBasic, is extremely efficient and for which we prove a constant factor approximation guarantee when the objective that it tries to maximize is restricted strongly convex (RSC) and restricted smooth (RSM) \cite{weaksub}, even with the additional non-negativity constraint on the importance weights. Functions that are RSC and RSM form a rich subclass of weakly submodular functions, including but not limited to ordinary least squares, generalized linear models, structured regularizers for matrix completion or any form of M-estimator \cite{weaksubInit, rsceg}. Loosely speaking, weakly submodular functions are close to being submodular but not quite and for which greedy algorithms lead to good solutions in the batch setting \cite{Nemhauser78}. Submodularity ratio \cite{weaksubInit} is a way of measuring this distance from submodularity.

In fact more generally, a constant factor bound can be shown for monotonic weakly submodular functions for whom the submodularity ratio can be bounded from above. This includes the RSC and RSM function class. This (positive) result which provides a sufficient condition for having a constant factor streaming guarantee for weakly submodular functions may be of special interest given the recent negative result \cite{weakstream} showing the absence of such a guarantee for the general class of weakly submodular functions. As an example and for the reader to obtain further insight we discuss the counter example given in \cite{weakstream} used to prove their negative result in the context of submodularity ratio, arguing that it cannot be bounded for that specific function.

Our second streaming algorithm, ProtoStream, is an enhancement of the first and is threshold based selecting elements with high incremental gain leading to a diverse selection which may not be the case with ProtoBasic. We provide theoretical arguments for which thresholds should be selected when running this algorithm.

We then empirically evaluate the efficacy of our algorithms for the prototype selection application \cite{proto}. We compare with the state-of-the-art streaming algorithm recently proposed for weakly submodular functions \cite{weakstream} in terms of performance and speed on two real datasets: MNIST- a handwritten digits dataset and Letters- a UCI dataset containing the alphabet written in different fonts and styles.

\section{Preliminaries}
Given a positive integer $n$, let $[n]:=\{1,...,n\}$ denote the set of the first $n$ natural numbers. 
\begin{definition}[Definition 1 (Submodularity Ratio):]
Let $L, S \subset [n]$ be two disjoint sets, and $f:[n]\rightarrow R$. The submodularity ratio \cite{weaksubInit} of L with respect to (w.r.t.) S is given by:
\begin{equation}
\label{subr1}
\gamma_{L,S} = \frac{\sum_{i\in S}\left(f(L\cup i)-f(L)\right)}{f(L\cup S)-f(L)}
\end{equation}
\eat{
The submodularity ratio of a set $U$ w.r.t. a positive integer $r$ is given by:
\begin{equation}
\label{subr2}
\gamma_{U,r} = \min\limits_{\substack{L,S: L \cap S=\emptyset \\ L\subseteq U;|S|\le r}}\gamma_{L,S}
\end{equation}
}
\end{definition}
The function $f(.)$ is submodular iff $\forall L,S$, $\gamma_{L,S}\ge 1$. However, if $\gamma_{L,S}$ can be shown to be bounded away from 0, but not necessarily $\ge 1$, then $f(.)$ is said to be weakly submodular.

\begin{definition}[Definition 2 (RSC and RSM):]
\label{def:RSCRSM}
A function $l:R^{n+}\rightarrow R$ is said to be restricted strong concave with parameter $c_\Omega$ and restricted smooth with parameter $C_\Omega$ \cite{weaksub} if $\forall \x,\y \in \Omega\subset R^{n+}$;

\begin{equation}
-\frac{c_\Omega}{2}\|\y-\x\|^2_2\ge l(\y)-l(\x)-\langle\nabla l(\x),\y-\x\rangle\ge-\frac{C_\Omega}{2}\|\y-\x\|^2_2.
\end{equation}
\end{definition}
We denote the RSC and RSM parameters on the domain $\Omega_m =\{ \x: \|\x\|_0 \leq m; \x \geq 0\}$ of all \emph{m-sparse non-negative} vectors by $c_{m}$ and $C_{m}$ respectively. We care about this non-negative orthant denoted by $R^{n+}$ because of our additional non-negativity constraint on the learned weights for each selected prototypes motivated from an interpretability standpoint. This is further explained in Section~ref{sec:experiments}. Also, let $\tilde{\Omega} = \{(\x,\y): \|\x-\y\|_0 \leq k\}$  with the corresponding smoothness parameter $\tilde{C}_k$.

\section{Problem Statement}
Given $n$ elements from an input space $X$, a constant $m << n$ independent of n, and a continuous function $l:R^{n+}\rightarrow \mathbb{R}$ with RSC and RSM properties, our objective is:
\begin{equation*}
\mbox{Maximize   } l(\w) \mbox{  s.t.  } \|w\|_0 \leq m \mbox{ and } \w \geq 0.
\end{equation*}
Defining a set function $f: [n] \rightarrow \mathbb{R}$ as 
\begin{equation}
\label{def:f}
f\left(L\right) \equiv \max\limits_{\w: supp(\w) \in L} l\left(\w\right)
\end{equation}
for a set $L \subset [n]$ where $supp(\w) = \{j: \w_j \geq 0\}$, our goal is to find that set $L = L^{\ast}$ that maximizes $f(.)$ subject to the cardinality constraint that $\left|L^{\ast} \right| \leq m$. Note that $f(.)$ is monotonic as if $L_1 \subseteq L_2$ then $f\left(L_2\right) \geq f\left(L_1\right)$. Hence, without loss of generality we assume that $f(\emptyset) = 0$. Given a set $L$, the point at which $l(.)$ attains maximum with the support in $L$ is represented by $\bzeta^{\left(L\right)}$. 

Easy to see that explicitly computing $L^{\ast}$ is an NP-complete problem. In this work, we develop a fast streaming algorithm that closely approximates $f\left(L^{\ast}\right)$ even for the \emph{worst case streaming order} of the $n$ elements. To this end, we show later that when $l(.)$ is RSC and RSM, then it is possible to have a \emph{constant factor streaming algorithm} even for the worst case streaming order. More generally, we establish that if the submodularity ratio for any weakly submodular monotonic set function $f(.)$ is bounded from above, then a streaming algorithm with constant approximation guarantee exists as stated in Theorem~\ref{thm:constappboundedgamma}.

\eat{
Let $\X$ be the space of all covariates from which we obtain the samples $X^{(1)}$ and $X^{(2)}$. Consider a kernel function $k:\mathcal{X} \times \mathcal{X} \rightarrow \mathbb{R}$ and its associated reproducing kernel Hilbert space (RKHS) $\K$ endowed with the inner product $k(\x_i,\x_j) = \langle \phi_{\x_i},\phi_{\x_j} \rangle$ where $\phi_{\x}(\y) = k(\x,\y) \in \K$ is continuous linear functional satisfying
for any function $h \in\K : \X \rightarrow \mathbb{R}$.

The maximum mean discrepancy (MMD) is a measure of difference between two distributions $p$ and $q$ where if $\bmu_p = \E_{\x \sim p}[\phi_{\x}]$ it is given by:
\begin{equation*}
\begin{split}
MMD(\K,p,q) &= \sup\limits_{h \in \K} \left(\E_{\x \sim p}[h(\x)] - \E_{\y \sim q}[h(\y)] \right)\\ &= \sup\limits_{h \in \K} \langle h, \bmu_p - \bmu_q\rangle.
\end{split}
\end{equation*}

Our goal is to approximate $\bmu_p$ by a weighted combination of $m$ sub-samples $Z\subseteq X^{(2)}$ drawn from the distribution $q$, i.e.,
$\bmu_p(\x) \approx \sum\limits_{j:\z_j \in Z} w_j k(\z_j,\x)$
\eat{which in turn implies $\sup\limits_{\parallel h \parallel \leq 1} \left(\E_{\x  \sim p}[h(\x)] - \langle h,\bmu_{q^{\prime}} \rangle\right) = \sup\limits_{\parallel h \parallel \leq 1} \langle h, \bmu_p-\bmu_{q^{\prime}} \rangle = \parallel \bmu_p-\bmu_{q^{\prime}} \parallel$ giving us tight approximations in expectations of bounded functions $h$. These weights are the RND of $q^{\prime}$ w.r.t $q$, i.e. $\beta_i = \frac{\,d q^{\prime}(\z_i)}{\,d q(\z_i)}$.}
\eat{Recall that our goal is to estimate $\beta_i$'s over a small set of $m$ samples in $X^{(2)}$ such that the empirical estimate of the expected value $\bmu_{q^{\prime}}$ closely approximates the empirical estimate of $\bmu_p$ computed using \emph{all} the samples in $X^{(1)}$.} 
where $w_j$ is the associated weight of the sample $\z_j \in X^{(2)}$.  We thus need to choose the prototype set $Z \subseteq X^{(2)}$ of cardinality ($|.|$) $m$ where $n^{(1)}=|X^{(1)}|$ and learn the weights $w_j$ that minimizes the finite sample $MMD$ metric with the additional \emph{non-negativity constraint} for interpretability, as given below:
\begin{equation}
\label{eq:MMDhat}
\begin{split}
&\widehat{MMD}(\mathcal{K},X^{(1)},Z,\w) \\&= \frac{1}{(n^{(1)})^2} \sum\limits_{\x_i,\x_j \in X^{(1)}} k(\x_i,\x_j) - \frac{2}{n^{(1)}} \sum\limits_{\z_j \in Z} w_j \sum\limits_{\x_i \in X^{(1)}} k(\x_i,\z_j) \\
&+ \sum\limits_{\z_i, \z_j \in Z} w_i w_j k(\z_i,\z_j);\text{    subject to }w_j \geq 0, \forall \z_j\in Z.
\end{split}
\end{equation}
Index the elements in $X^{(2)}$ from 1 to $n^{(2)}=|X^{(2)}|$ and for any $Z \subseteq X^{(2)}$ let $L_{Z} \subseteq \left[n^{(2)}\right]=\{1,2,\ldots, n^{(2)}\}$ be the set containing its indices. Discarding the constant terms in (\ref{eq:MMDhat}) that do not depend on $Z$ and $\w$ we define the function
\begin{align}
\label{eq:l} 
l\left(\w\right) =  \w^T \bmu_p - \frac{1}{2} \w^T K \w
\end{align}
where $K_{i,j} =  k(\y_i,\y_j)$ and $\mu_{p,j} = \frac{1}{n^{(1)}} \sum\limits_{\x_i \in X^{(1)}} k(\x_i,\y_j); \forall \y_j \in X^{(2)}$ is the point-wise empirical evaluation of the mean $\bmu_p$. Our goal then is to find a index set $L_Z$ with $\left|L_Z\right| \leq m$ and a corresponding $\w$ such that the set function $f: 2^{\left[n^{(2)}\right]} \rightarrow \mathbb{R}^+$ defined as
\begin{equation}
\label{def:f}
f\left(L_{Z}\right) \equiv \max\limits_{\w: supp(\w) \in L_{Z},\w \geq 0} l\left(\w\right)
\end{equation}
attains maximum. Here $supp(\w) = \{j: \w_j > 0\}$. We will denote the maximizer for the set $L_Z$ by $\bzeta^{\left(L_Z\right)}$.
}

\section{Related Work}

As mentioned before subset selection especially based on submodularity has wide applications in understanding, summarizing and manipulating large datasets \cite{fujishige05,lo83,Kim16} given that it is possible to obtain tractable algorithms with constant factor guarantees. In fact, it is known that even in the streaming setting \cite{substream} constant factor algorithms are possible for submodular functions.

Recently, it was shown \cite{weakstream} though that for the larger class of weakly submodular functions \cite{weaksubInit} no constant factor algorithm can exist in the streaming setting. This was a surprising result given that for the batch setting it has been known for a while that such approximation algorithms do exist \cite{Nemhauser78}. 

In this work we propose streaming algorithms for a rich subclass of weakly submodular functions \cite{weaksub} namely those that are RSC and RSM. Efficient batch algorithms for the same were proposed in \cite{proto,weaksub}. In fact, the focus on interpretability through learning non-negative weights was highlighted in \cite{proto}. Our work thus shows that a constant factor streaming algorithm is possible for RSC and RSM weakly submodular functions or more generally for weakly submodular functions for whom the submodularity ratio can be bounded from above even with having to learn non-negative weights for the selected elements indicative of their importance from an interpretability standpoint \cite{nmf,proto,sproto}. This thus provides a sufficient condition, which includes a rich enough subclass of weakly submodular functions, for obtaining such a guarantee and is interesting in light of the recent result \cite{weakstream}.
 
\begin{algorithm}[t]
    \caption{ProtoBasic}
    \label{protobasic}
\begin{algorithmic}
\STATE \textbf{Input:} sparsity level $m$, elements $X$ and function $l(.)$
\STATE $L=\emptyset$, $\bzetaL=\mathbf{0}$
\FOR{each new element with index $j$}
\STATE \textbf{if} $|L|< m$ \textbf{then} $L = L \cup \{j\}$
\STATE \textbf{Else} 
\STATE ~~~~~~~Compute $g_j = \nabla l_j(\mathbf{0})$, $g_{min} = \min\limits_{i\in L} \nabla l_i(\mathbf{0})$ and $k = \argmin\limits_{i\in L} \nabla l_i(\mathbf{0})$
\STATE ~~~~~~~\textbf{if} $g_j > g_{min}$ \textbf{then} Update $L=L\setminus\{k\}\cup \{j\}$ \textbf{end if}
\STATE \textbf{end if}
\ENDFOR 
\STATE $\bzetaL=\argmax\limits_{\w:supp(\w)\in L,\w\ge 0} l(\w)$
\RETURN $L$, $\bzetaL$ 
\end{algorithmic}
\end{algorithm}

\begin{algorithm}[t]
    \caption{ProtoStream}
    \label{protostream}
\begin{algorithmic}
\STATE \textbf{Input:} sparsity level $m$, elements $X$, function $l(.)$, $ \epsilon$
\STATE $v=-\infty$, $L=\emptyset$, $L_{\tau}=\emptyset$, $\bzetaLt =\mathbf{0}$ \COMMENT{$L_{\tau}$ is the set corresponding to threshold $\tau$.}
\FOR{each new element with index $j$}
\STATE \textbf{if} $\nabla l_j(\mathbf{0})\ge v$
\textbf{then} $v=\nabla l_j(\mathbf{0})$, $L=\{j\}$ and $\rho = \nabla l_i^+(\mathbf{0})^2$ \textbf{end if}
\FOR{threshold $\tau\in O_{\rho}=[\frac{\rho}{2m},\frac{\rho m}{2}]$ at geometric sequence with ratio $(1+\epsilon)$}
\IF{$|L_{\tau}|< m $ and $\nabla l_j(\bzetaLt)\ge \sqrt{\frac{2\tau}{m}}$}
\STATE $L_{\tau} = L_{\tau} \cup \{j\}$ and $\bzetaLt=\argmax\limits_{\w:supp(\w)\in L_{\tau},\w\ge 0} l(\w)$
\ENDIF
\ENDFOR 
\ENDFOR
\STATE $\bzetaL=\argmax\limits_{\w:supp(\w)\in L,\w\ge 0} l(\w)$
\STATE \textbf{if} $l(\bzetaL)\le \max\limits_{L_{\tau}}l(\bzetaLt)$ \textbf{then} set $\eta=\argmax\limits_{\tau}l(\bzetaLt)$, $L=L_{\eta}$ and $\bzetaL=\bzeta^{(L_{\eta})}$ \textbf{end if}
\RETURN $L$, $\bzetaL$ 
\end{algorithmic}
\end{algorithm}

\section{Methods and Results}

In this section we propose two streaming algorithms, a simple one and an enhanced threshold based one. We show based on our first algorithm that it is possible to obtain a constant factor bound for RSC and RSM functions. Also more generally, the constant factor bound can be shown for any monotonic weakly submodular function with submodularity ratio bounded from above. Here we also discuss the counter example given in \cite{weakstream} in the context of submodularity ratio. We then describe our second threshold based algorithm which is an enhancement of the first and that adds elements based on high incremental gain and is thus likely to select diverse elements leading to potentially better performance in practice. We provide a (theoretical) discussion here of what thresholds should be considered when running this algorithm.

\subsection{Algorithmic Description for ProtoBasic}

Algorithm \ref{protobasic}, ProtoBasic, is the first streaming algorithm we propose. The algorithm is quite simple where we maintain only one active solution set $L$ making it extremely fast. Moreover, only function gradient evaluations are required for deciding on each new element, rather than function evaluations as in \cite{weakstream} adding to its scalability.

The algorithm first proceeds by selecting the first $m$ elements. Then for every subsequent element it checks the value of adding that element to the empty set based on the function gradient. If this value is higher than the minimum value amongst the elements that have been currently selected, then we replace this minimum value element with the current one. The minimum value element can be accessed efficiently possibly using a min heap data structure. Finally, the optimal weights can be computed for the selected set.

\subsection{Theoretical Guarantees}
We first based on Lemmas \ref{lemma:lowerBound} and \ref{lemma:upperBound} for any RSC and RSM function show a constant factor bound for ProtoBasic. We then show how RSC and RSM implies bounds on the submodularity ratio and how a bounded submodularity ratio can also lead to algorithms with constant factor guarantees. Complete proofs can be found in Appendix~\ref{sec:proofs} .

\begin{lemma}
\label{lemma:lowerBound}
Let $\tilde{C}_k$ be the RSM constant for any two vectors $\x$ and $\y \in \R^{b^+}$ where $\|\x - \y\|_0 \leq k$. Then for any two sets $L$ and $S$ with $L \cap S = \emptyset$, $|S| \leq k$ and $\nabla l_{S}^+\left(\bzetaL \right)=\max \left(\gradlSbzetaL ,0 \right)$ we have,
\begin{equation}
\lbzetaLS - \lbzetaL \geq \frac{1}{2 \tilde{C}_k} \left\|\nabla l_{S}^+\left(\bzetaL \right) \right\|^2 \nonumber.
\end{equation}
\end{lemma}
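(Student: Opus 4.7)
The plan is to lower bound $l(\bzetaLS)$ by exhibiting a feasible competitor with support in $L \cup S$ and non-negative coordinates, then invoking the restricted smoothness inequality at $\bzetaL$. Specifically, I would set $\bv = \nabla l_S^+(\bzetaL) \in \R^{n+}$, extended by zero outside $S$, and consider the perturbation $\w_t = \bzetaL + t\bv$ for a scalar $t > 0$ to be chosen later. Since $\bzetaL \geq 0$ is supported on $L$, $\bv \geq 0$ is supported on $S$, and $L \cap S = \emptyset$, the vector $\w_t$ is coordinate-wise non-negative with $\text{supp}(\w_t) \subseteq L \cup S$. Hence $\w_t$ is admissible in the maximization defining $f(L \cup S)$, giving $l(\bzetaLS) \geq l(\w_t)$.

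Next, I would apply the RSM inequality from Definition~\ref{def:RSCRSM} to the pair $(\bzetaL, \w_t)$. The difference $\w_t - \bzetaL = t\bv$ is supported on $S$ and thus has at most $|S| \leq k$ nonzero entries, so the pair lies in $\tilde{\Omega}$ and the smoothness constant $\tilde{C}_k$ applies:
\[
l(\w_t) - l(\bzetaL) \;\geq\; \langle \nabla l(\bzetaL),\, t\bv\rangle - \frac{\tilde{C}_k}{2}\, t^2 \|\bv\|^2.
\]
The key identity is that $\langle \nabla l(\bzetaL), \bv\rangle = \|\bv\|^2$: for each $j \in S$, if $\nabla l_j(\bzetaL) \geq 0$ then $\bv_j = \nabla l_j(\bzetaL)$ and the contribution is $\bv_j^2$; if $\nabla l_j(\bzetaL) < 0$ then $\bv_j = 0$ and the contribution is $0$. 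Summing over $S$ yields $\langle \nabla l(\bzetaL), \bv\rangle = \|\bv\|^2 = \|\nabla l_S^+(\bzetaL)\|^2$. Substituting gives $l(\w_t) - l(\bzetaL) \geq t\|\bv\|^2 - \tfrac{\tilde{C}_k}{2}t^2\|\bv\|^2$.

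The final step is to maximize the right-hand side over $t > 0$. This is a concave quadratic in $t$ maximized at $t^\star = 1/\tilde{C}_k$, feasible since positive, yielding the estimate $\tfrac{1}{2\tilde{C}_k}\|\nabla l_S^+(\bzetaL)\|^2$. Chaining with $l(\bzetaLS) \geq l(\w_{t^\star})$ completes the proof.

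The only delicate point is reconciling the non-negativity constraint with the descent-direction argument: the naive choice of direction $\nabla l_S(\bzetaL)$ would not guarantee $\w_t \geq 0$ on $S$ when some gradient components are negative, and more importantly would not match the squared positive-part gradient on the right-hand side of the lemma. Using $\bv = \nabla l_S^+(\bzetaL)$ simultaneously preserves feasibility on the non-negative orthant and, through the identity $\langle \nabla l(\bzetaL), \bv\rangle = \|\bv\|^2$, produces exactly $\|\nabla l_S^+(\bzetaL)\|^2$ in the bound.
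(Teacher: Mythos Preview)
Your proof is correct and follows essentially the same approach as the paper: both construct a feasible non-negative competitor of the form $\bzetaL$ plus a perturbation supported on $S$, apply the RSM inequality, and optimize the resulting quadratic lower bound. The only cosmetic difference is that the paper optimizes over separate coordinate-wise coefficients $\alpha_j \geq 0$ for $j \in S$ (obtaining $\alpha_j = \gradljpbzetaL/\tilde{C}_k$) whereas you fix the direction to $\bv = \nabla l_S^+(\bzetaL)$ and optimize a single scalar $t$; since the separable quadratic is maximized precisely along that direction, both arrive at the identical perturbation and bound.
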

\eat{*****\begin{proof}
Let $\bonej$ be a vector with a value one only at the $j^{th}$ coordinates and zero elsewhere.  For all $\alpha_j \geq 0$, define $\byS = \bzetaL + \sum\limits_{j \in S} \alpha_j \bonej$.  As $\bzetaLS$ is the optimal point for $f\left(L \cup S \right)$ we have
\begin{align}
\label{eq:lowerbound}\lbzetaLS - \lbzetaL &\geq l\left(\byS\right) - \lbzetaL \geq  \left\langle\nabla \lbzetaL,  \sum\limits_{j \in S} \alpha_j \bonej \right\rangle - \frac{\tilde{C}_k}{2} \sum\limits_{j \in S} \alpha_j^2.
\end{align}

Maximizing w.r.t. each $\alpha_j$, we get $\alpha_j = \frac{\gradljpbzetaL}{\tilde{C}_k}$ where $\gradljpbzetaL = \max \left(\gradljbzetaL ,0 \right)$. Substituting these values of $\alpha_j$ in (\ref{eq:lowerbound}) gives us the required lower bound, namely
\begin{align}
\label{eq:lowerBfinal}
\lbzetaLS - \lbzetaL \geq \frac{1}{2 \tilde{C}_k} \left\|\nabla l_{S}^+\left(\bzetaL \right) \right\|^2.
\end{align}
\end{proof}****}
\begin{proof}[Proof Sketch]
Based on definition of RSM and evaluating the KKT conditions for optimality we get the necessary lower bound.
\end{proof}

\begin{lemma}
\label{lemma:upperBound}
Let $c_k$ be the RSC constant for any two $k$ sparse vectors $\x$ and $\y \in \R^{b^+}$.  Then for any two sets $L$ and $S$ with $L \cap S = \emptyset$, $|L| + |S| = k$ and $\nabla l_{S}^+\left(\bzetaL \right)=\max \left(\gradlSbzetaL ,0 \right)$
 we have,
\begin{equation}
\lbzetaLS - \lbzetaL \leq \frac{1}{2 c_k} \left\|\nabla l_{S}^+\left(\bzetaL \right) \right\|^2 \nonumber.
\end{equation}
\end{lemma}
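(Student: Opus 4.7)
The plan is to use RSC in its concavity direction together with the KKT optimality conditions for $\bzetaL$, mirroring the structure of the proof of Lemma~\ref{lemma:lowerBound}. Both $\bzetaL$ and $\bzetaLS$ are non-negative and at most $k$-sparse (since $|L|+|S|=k$), so RSC applies and gives
\[
\lbzetaLS - \lbzetaL \leq \left\langle \nabla \lbzetaL,\, \bzetaLS - \bzetaL \right\rangle - \frac{c_k}{2}\left\|\bzetaLS - \bzetaL\right\|^2.
\]
The task is then to bound the inner product by something involving only $\nabla l_{S}^+(\bzetaL)$ and to absorb the cross term into the quadratic penalty.

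Next I would split the inner product into coordinates in $L$ and coordinates in $S$ (all others vanish). For $j \in L$, the first-order optimality conditions for $\bzetaL$ under the $\w \geq 0$ constraint yield $\nabla_j \lbzetaL \leq 0$ together with complementary slackness $\bzeta^{(L)}_j\,\nabla_j \lbzetaL = 0$. Since $\bzeta^{(L \cup S)}_j \geq 0$, this forces
\[
\nabla_j \lbzetaL \cdot \left( \bzeta^{(L \cup S)}_j - \bzeta^{(L)}_j \right) = \nabla_j \lbzetaL \cdot \bzeta^{(L \cup S)}_j \leq 0
\]
for every $j \in L$, so these terms may be discarded. For $j \in S$ we have $\bzeta^{(L)}_j = 0$ and $\bzeta^{(L \cup S)}_j \geq 0$, hence $\nabla_j \lbzetaL \cdot \bzeta^{(L \cup S)}_j \leq \nabla_j^+ \lbzetaL \cdot \bzeta^{(L \cup S)}_j$.

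Finally I would invoke Young's inequality $ab \leq \frac{a^2}{2c_k} + \frac{c_k b^2}{2}$ term by term, obtaining
\[
\sum_{j \in S} \nabla_j^+ \lbzetaL \cdot \bzeta^{(L \cup S)}_j \leq \frac{1}{2c_k}\left\|\gradlSpbzetaL \right\|^2 + \frac{c_k}{2}\sum_{j \in S}\left(\bzeta^{(L \cup S)}_j\right)^2.
\]
Because $\bzetaL$ is supported on $L$, the last sum is bounded by $\|\bzetaLS - \bzetaL\|^2$, and the corresponding $\frac{c_k}{2}\|\bzetaLS - \bzetaL\|^2$ term is cancelled exactly by the RSC quadratic penalty, yielding the claim. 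The only subtle step is discarding the $L$-coordinate contributions of the inner product; this is where the non-negativity constraint on $\w$ is essential, since without it the KKT conditions would only yield $\nabla_j \lbzetaL = 0$ on the interior but nothing for vanishing coordinates, and is the main structural difference from a classical RSC upper bound.
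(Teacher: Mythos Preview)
Your argument is correct and follows essentially the same route as the paper: start from the RSC inequality at the pair $(\bzetaLS,\bzetaL)$, use the KKT conditions for $\bzetaL$ to kill the $L$-coordinate contributions, and then optimize the remaining $S$-coordinates. The only cosmetic difference is that the paper relaxes to a maximum over all feasible $\bv$ supported on $L\cup S$ and computes the maximizer explicitly (obtaining $\bv_j=\bzetaL_j$ on $L$ and $\bv_j=\gradljpbzetaL/c_k$ on $S$), whereas you stay with $\bzetaLS$ and reach the same endpoint via Young's inequality; the two are equivalent since maximizing $a v-\tfrac{c_k}{2}v^2$ over $v\ge 0$ yields exactly $(a^+)^2/(2c_k)$.
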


\begin{proof}[Proof Sketch]
Based on definition of RSC and evaluating the KKT conditions for optimality we get the necessary upper bound.
\end{proof}

\begin{theorem}[Constant factor guarantee for RSC and RSM functions]
\label{constrsc}
Consider a function $l : R^{n+}\rightarrow R$ with RSC and RSM parameters $c_m$ and $\tilde{C}_m$ respectively and let $f(.)$ be a set function defined as in (\ref{def:f}). If $S$ is the solution of ProtoBasic and $L^{\ast}$ is the optimal set of size $m$, then for $\kappa=\frac{c_m}{\tilde{C}_m}$ we have
\begin{equation*}
\label{eq:const}
f\left(S\right) \geq \kappa  f\left(L^{\ast}\right). 
\end{equation*}
\end{theorem}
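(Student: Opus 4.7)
The plan is to prove the theorem by combining Lemmas~\ref{lemma:lowerBound} and~\ref{lemma:upperBound} with an elementary observation about what set ProtoBasic actually returns. The key observation is that, after the $m$-element initialization phase, ProtoBasic's swap rule depends only on the gradient of $l$ evaluated at $\mathbf{0}$ (not at the current iterate $\bzetaL$). Hence the returned set $S$ is precisely the set of $m$ indices with the largest values of $g_j := \nabla l_j(\mathbf{0})$ among all streamed elements. In particular, for any other set $T$ of size at most $m$, $\sum_{j \in S} (g_j)_+^2 \geq \sum_{j \in T} (g_j)_+^2$, because the top $m$ by $g_j$ coincide with the top $m$ by $(g_j)_+$ once negative entries are zeroed.

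First I would instantiate Lemma~\ref{lemma:lowerBound} with $L = \emptyset$, $S$ equal to the output of ProtoBasic, and $k = m$. Since $f(\emptyset) = 0$ and $\bzeta^{(\emptyset)} = \mathbf{0}$, this yields
\begin{equation*}
f(S) \;=\; l(\bzeta^{(S)}) - l(\mathbf{0}) \;\geq\; \frac{1}{2\tilde{C}_m}\,\sum_{j \in S} \bigl(\nabla l_j(\mathbf{0})\bigr)_+^2 .
\end{equation*}
Next I would instantiate Lemma~\ref{lemma:upperBound} with $L = \emptyset$, $S = L^{\ast}$, and $k = m$ (so $|L| + |S| = m$), obtaining
\begin{equation*}
f(L^{\ast}) \;=\; l(\bzeta^{(L^{\ast})}) - l(\mathbf{0}) \;\leq\; \frac{1}{2 c_m}\,\sum_{j \in L^{\ast}} \bigl(\nabla l_j(\mathbf{0})\bigr)_+^2 .
\end{equation*}

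I would then carry out the sorting step: because ProtoBasic maintains the $m$ indices with the largest $g_j$, and the map $t \mapsto (t)_+^2$ is nondecreasing on $\mathbb{R}$ with the property that the top $m$ values of $g_j$ majorize the top $m$ values of $(g_j)_+^2$, we get $\sum_{j \in S} (g_j)_+^2 \geq \sum_{j \in L^{\ast}} (g_j)_+^2$. Chaining the three inequalities gives
\begin{equation*}
f(S) \;\geq\; \frac{1}{2\tilde{C}_m}\sum_{j \in L^{\ast}} (g_j)_+^2 \;=\; \frac{c_m}{\tilde{C}_m}\cdot\frac{1}{2 c_m}\sum_{j \in L^{\ast}} (g_j)_+^2 \;\geq\; \frac{c_m}{\tilde{C}_m}\,f(L^{\ast}) \;=\; \kappa\,f(L^{\ast}),
\end{equation*}
which is the desired bound.

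The main obstacle I anticipate is not any of the inequalities themselves — those come straight from the two lemmas — but rather justifying cleanly the claim that ProtoBasic's output maximizes $\sum_{j \in S} (\nabla l_j(\mathbf{0}))_+^2$ over all size-$m$ subsets. Two subtleties need care: (i)~during the first-$m$ initialization phase elements are admitted unconditionally, so if fewer than $m$ streamed elements have positive $\nabla l_j(\mathbf{0})$ the set $S$ may contain indices with $(g_j)_+ = 0$, which is harmless since those contribute zero to both sides; and (ii)~the swap rule compares $g_j$ values, not $(g_j)_+^2$ values, but monotonicity of $t\mapsto (t)_+^2$ ensures the top-$m$ order by $g_j$ also majorizes by $(g_j)_+^2$. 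Once this is spelled out, the theorem follows immediately by chaining the two lemma bounds with the sorting inequality.
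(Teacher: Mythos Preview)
Your proposal is correct and follows essentially the same route as the paper's proof: apply Lemma~\ref{lemma:lowerBound} with $L=\emptyset$ to the output set $S$, apply Lemma~\ref{lemma:upperBound} with $L=\emptyset$ to $L^{\ast}$, and bridge the two via the observation that ProtoBasic returns the top-$m$ indices by $\nabla l_j(\mathbf{0})$ so that $\|\nabla l_S^+(\mathbf{0})\|^2 \ge \|\nabla l_{L^\ast}^+(\mathbf{0})\|^2$. Your extra care in justifying that ordering by $g_j$ also gives the maximal $\sum_j (g_j)_+^2$ (via monotonicity of $t\mapsto (t)_+^2$) is a welcome refinement over the paper's one-line appeal to ``$S$ contains the elements that maximize the gradient values,'' but the argument is otherwise identical.
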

\begin{proof}[Proof Sketch]
First setting $L=\emptyset$ in lemma \ref{lemma:lowerBound} and then setting $S=L^{\ast}$ and $L=\emptyset$ in lemma \ref{lemma:upperBound} we get the constant factor bound.
\end{proof}

\begin{lemma}[Bounded submodularity ratio $\gamma$]
Let $f(.)$ be a set function defined as in (\ref{def:f}) where $l(.)$ is RSC and RSM. Then for any two disjoint sets $L$ and $S$ we have,
\begin{equation*}
\label{eq:submodbounds}
\frac{c_{|L|+|S|}}{\tilde{C}_1} \leq \gamma_{L,S} \leq \frac{\tilde{C}_{|S|}}{c_{|L|+1}}. 
\end{equation*}
\end{lemma}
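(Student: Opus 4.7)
The plan is to apply Lemmas \ref{lemma:lowerBound} and \ref{lemma:upperBound} in two complementary ways: once to each singleton $\{i\}\subseteq S$ in order to control the numerator of $\gamma_{L,S}$, and once to the full set $S$ in order to control the denominator. The key observation is that both lemmas sandwich function-value increments between multiples of the same quantity $\|\nabla l_S^+(\bzetaL)\|^2$, so when one forms the ratio that defines $\gamma_{L,S}$ this squared gradient norm cancels and only the RSC/RSM constants remain.

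First I would decompose the numerator $\sum_{i\in S}(f(L\cup\{i\})-f(L))$ term by term. For each $i\in S$, Lemma \ref{lemma:lowerBound} applied with the second set equal to $\{i\}$ (so $k=1$) gives $f(L\cup\{i\})-f(L)\ge(\nabla l_i^+(\bzetaL))^2/(2\tilde{C}_1)$, while Lemma \ref{lemma:upperBound} applied to the same pair with $k=|L|+1$ gives $f(L\cup\{i\})-f(L)\le(\nabla l_i^+(\bzetaL))^2/(2c_{|L|+1})$. Summing over $i\in S$ and identifying $\sum_{i\in S}(\nabla l_i^+(\bzetaL))^2$ with $\|\nabla l_S^+(\bzetaL)\|^2$ yields two-sided bounds on the numerator, both proportional to this squared gradient norm.

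Next I would bound the denominator $f(L\cup S)-f(L)$ by a single application of each lemma. Lemma \ref{lemma:lowerBound} with $k=|S|$ gives $f(L\cup S)-f(L)\ge\|\nabla l_S^+(\bzetaL)\|^2/(2\tilde{C}_{|S|})$, and Lemma \ref{lemma:upperBound} with $k=|L|+|S|$ gives the matching upper bound with $c_{|L|+|S|}$ in the denominator. Dividing the lower bound on the numerator by the upper bound on the denominator yields $\gamma_{L,S}\ge c_{|L|+|S|}/\tilde{C}_1$, and the opposite pairing yields $\gamma_{L,S}\le\tilde{C}_{|S|}/c_{|L|+1}$; in each case the factor $\|\nabla l_S^+(\bzetaL)\|^2$ drops out.

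The only step that requires care is choosing the correct sparsity index $k$ for each invocation of the lemmas — specifically, using $\tilde{C}_1$ rather than $\tilde{C}_{|S|}$ for the singleton-by-singleton applications in the numerator, and $c_{|L|+|S|}$ rather than $c_{|L|+1}$ for the joint application in the denominator. Once this bookkeeping is handled, no algebraic work beyond taking the two ratios is required.
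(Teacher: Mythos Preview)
Your proposal is correct and follows exactly the approach the paper intends: apply Lemmas~\ref{lemma:lowerBound} and~\ref{lemma:upperBound} once per singleton $\{i\}\subseteq S$ to sandwich the numerator and once to the full set $S$ to sandwich the denominator, then take ratios so that $\|\nabla l_S^+(\bzetaL)\|^2$ cancels. In fact your write-up is more explicit than the paper's own proof, which simply cites the two inequalities and states the result without spelling out the four invocations or the choice of sparsity indices $k$.
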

\begin{proof}[Proof Sketch]
Using inequalities in lemmas \ref{lemma:lowerBound} and \ref{lemma:upperBound} we can bound the submodularity ratio for any RSC and RSM function as above.
\end{proof}

\begin{theorem}[Constant factor guarantee for functions with bounded $\gamma$]
\label{thm:constappboundedgamma}
Let $f(.)$ be a monotonic weakly submodular function with the property that any set $Z$ of cardinality $m$ has a bounded submodularity ratio, i.e., $r_m\le\gamma_{\emptyset,Z}\le R_m$ where $r_m$ and $R_m$ are positive constants independent of $Z$ and depends only on $m$. Then the set $S$ containing the $m$ elements with the highest singleton $f(.)$ values computable in a streaming setting (say by using min heaps) satisfies,
\begin{equation*}
\label{eq:const2}
f\left(S\right) \geq \kappa  f\left(L^{\ast}\right) \text{ where }\kappa= \frac{r_m}{R_m}
\end{equation*}
where $L^{\ast}$ is the optimal size $m$ solution at which $f\left(L^{\ast}\right)$ attains maximum.
\end{theorem}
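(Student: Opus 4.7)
The plan is to unpack the definition of the submodularity ratio at $L=\emptyset$, which directly relates $f(Z)$ to the sum of its singleton values, and then exploit the fact that $S$ is chosen precisely to maximize this sum. Since $f(\emptyset)=0$ by assumption, we have for every set $Z$ of cardinality $m$
\begin{equation*}
\gamma_{\emptyset, Z} \;=\; \frac{\sum_{i \in Z} f(\{i\})}{f(Z)},
\end{equation*}
so the hypothesis $r_m \le \gamma_{\emptyset, Z} \le R_m$ is equivalent to the double inequality
\begin{equation*}
\frac{1}{R_m}\sum_{i\in Z} f(\{i\}) \;\le\; f(Z) \;\le\; \frac{1}{r_m}\sum_{i\in Z} f(\{i\}).
\end{equation*}

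The second step is to apply these two bounds in opposite directions to the two sets of interest. Apply the lower bound to $S$ to get $f(S) \ge \frac{1}{R_m}\sum_{i\in S} f(\{i\})$, and apply the upper bound to $L^{\ast}$ to get $\sum_{i\in L^{\ast}} f(\{i\}) \ge r_m \, f(L^{\ast})$. The only remaining ingredient is the defining property of $S$: because $S$ consists of the $m$ indices with the largest singleton values $f(\{i\})$, we have
\begin{equation*}
\sum_{i\in S} f(\{i\}) \;\ge\; \sum_{i\in L^{\ast}} f(\{i\}).
\end{equation*}
Chaining these three inequalities yields $f(S) \ge \frac{r_m}{R_m} f(L^{\ast})$, which is the claim with $\kappa = r_m/R_m$.

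A brief remark on the streaming aspect: computing $S$ in the stream requires only the singleton values $f(\{j\}) = \max_{w\ge 0,\,\mathrm{supp}(w)\subseteq\{j\}} l(w)$ for each arriving element, together with a min-heap of size $m$ that evicts the current minimum whenever a larger singleton value arrives, exactly as in ProtoBasic (with the gradient-at-zero test replaced by the singleton-$f$ test, which coincides for RSC/RSM $l$).

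I do not expect a real obstacle here: the argument is essentially a one-line consequence of rewriting $\gamma_{\emptyset,Z}$ and using the extremal choice of $S$. The only thing worth being careful about is the normalization $f(\emptyset)=0$ (already stipulated in the problem statement) and ensuring that $|L^{\ast}| = m$ so that the bound on $\gamma_{\emptyset,L^{\ast}}$ from the hypothesis is applicable; if $|L^{\ast}|<m$ the same argument goes through by monotonicity since we could extend $L^{\ast}$ to size $m$ without decreasing $f$.
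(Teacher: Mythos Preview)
Your proof is correct and follows essentially the same approach as the paper: both rewrite $\gamma_{\emptyset,Z}=\frac{\sum_{i\in Z}f(\{i\})}{f(Z)}$, apply the upper bound $R_m$ to $S$, the lower bound $r_m$ to $L^{\ast}$, and bridge the two sums via the extremal choice of $S$. Your additional remarks on $f(\emptyset)=0$ and the case $|L^{\ast}|<m$ are careful touches the paper omits.
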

\begin{proof}[Proof Sketch]
The result follows from the inequalities that ensue given the fact that $\forall j \in S$; $ f(\{j\}) \geq f(\{p\})$; $p \notin  S$. 
\end{proof}

\subsection{Impossibility Result and Submodularity Ratio}

We now briefly describe how the submodularity ratio of the weakly submodular function constructed in \cite{weakstream} to show the impossibility result \emph{cannot} be bounded from above and thus does not contradict our results. Moreover, it provides insight into the connections between the two. As considered in \cite{weakstream}, for any set $S$ define the functions $u(S) = |S \cap U|$ and $v(S) = |S \cap V|$ using the base elements $U = \{u_i\}_{i=1}^k$ and $V=\{v_i\}_{i=1}^k$. An impossibility result is shown for the set function $f_k(S) = \min\{2u(S) +1, 2 v(S)\}$. Letting $L = \emptyset$ and $S = V$ we find $\gamma_{L,S} = \frac{\sum\limits_{j \in S} f_k(\{j\})}{f_k(S)}$. For any singleton set $\{j\} \subset V$, $u(\{j\}) = 0$ and $ v(\{j\}) = 1$ implies $f_k(\{j\}) = 1$. Further $f_k(S) = 1$ as $u(S) = 0$. Hence $\gamma_{L,S} = |S| = k$ grows with $k$ which can be made large enough to violate any upper bound and thereby engendering the impossibility result.

\subsection{Algorithmic Description for ProtoStream}
Algorithm \ref{protostream}, ProtoStream, unlike ProtoBasic is threshold based. It maintains multiple candidate sets of elements in parallel corresponding to thresholds in the range $O_{\rho} =[\frac{\rho}{2m}, \frac{\rho m}{2}]$ at intervals of $(1+\epsilon)$ for an user input $\epsilon \in (0,1)$. Here $\rho = \left[\gradlppbzero\right]^2$ where $p$ is the element such that $\gradlpbzero \geq \gradlibzero$ among all the encountered elements $j$. The total number of candidates sets that are simultaneously maintained is $O\left(\frac{\log m}{\epsilon}\right)$ requiring a total space of $O\left(\frac{m \log m}{\epsilon}\right)$ independent of $n$. Value $\rho$ depends on the highest gradient element $p$ encountered thus far which is also one of the candidate sets. Those sets are updated for which the incremental gain in adding the new element based on its gradient is greater than $\sqrt{\frac{2\tau}{m}}$, where the $\tau$ are the  thresholds in $O_{\rho}$. Notice that the incremental gain is a constant that does \emph{not depend} on $\gamma$ or RSC and RSM parameters of the objective function and is thus easily computable. Eventually, the set along with its corresponding weights that has the highest value of $l(.)$ is chosen as the final solution. Lemma~\ref{lemma:cardinalityMet} gives a lower bound for the set function evaluated at the set $L_{\tau}$ containing $m$ elements corresponding to a threshold $\tau$.

\begin{lemma}
\label{lemma:cardinalityMet}
If the set $L_{\tau}$ for the threshold $\tau$ has cardinality $m$ then $f\left(L_{\tau}\right) \geq \frac{\tau}{\tilde{C}_1}$.
\end{lemma}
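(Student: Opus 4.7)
The plan is to exploit the threshold condition that every accepted element must satisfy and telescope one-step gains. Let $j_1, j_2, \ldots, j_m$ be the elements added to $L_{\tau}$ in order, and let $L_{\tau}^{(i)} = \{j_1, \ldots, j_i\}$ with $L_{\tau}^{(0)} = \emptyset$. Note that the algorithm recomputes $\bzeta^{(L_\tau)}$ after every insertion, so when $j_i$ was added we had $\nabla l_{j_i}\bigl(\bzeta^{(L_\tau^{(i-1)})}\bigr) \geq \sqrt{\tfrac{2\tau}{m}}$; in particular this gradient component is nonnegative, so $\nabla l_{j_i}^+\bigl(\bzeta^{(L_\tau^{(i-1)})}\bigr)$ equals the raw gradient entry and satisfies the same inequality.

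Next I would apply Lemma \ref{lemma:lowerBound} with $L = L_\tau^{(i-1)}$, $S = \{j_i\}$, and $k=1$ (so the relevant smoothness constant is $\tilde{C}_1$). This yields the one-step lower bound
\begin{equation*}
l\bigl(\bzeta^{(L_\tau^{(i)})}\bigr) - l\bigl(\bzeta^{(L_\tau^{(i-1)})}\bigr) \;\geq\; \frac{1}{2\tilde{C}_1}\left(\nabla l_{j_i}^+\bigl(\bzeta^{(L_\tau^{(i-1)})}\bigr)\right)^2 \;\geq\; \frac{1}{2\tilde{C}_1}\cdot\frac{2\tau}{m} \;=\; \frac{\tau}{m\,\tilde{C}_1}.
\end{equation*}

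The final step is a telescoping sum over $i=1,\ldots,m$. Since $l(\bzeta^{(\emptyset)}) = l(\mathbf{0})$ and we assumed (without loss of generality) that $f(\emptyset) = 0$, i.e.\ $l(\mathbf{0})=0$, summing the $m$ inequalities gives
\begin{equation*}
f(L_\tau) \;=\; l\bigl(\bzeta^{(L_\tau)}\bigr) \;=\; \sum_{i=1}^{m}\left[l\bigl(\bzeta^{(L_\tau^{(i)})}\bigr) - l\bigl(\bzeta^{(L_\tau^{(i-1)})}\bigr)\right] \;\geq\; m\cdot\frac{\tau}{m\,\tilde{C}_1} \;=\; \frac{\tau}{\tilde{C}_1}.
\end{equation*}

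There is no real obstacle here; the only subtlety worth flagging is that Lemma \ref{lemma:lowerBound} must be invoked at the \emph{current} optimizer $\bzeta^{(L_\tau^{(i-1)})}$ rather than at some earlier state, so the proof relies on the fact that ProtoStream refreshes $\bzeta^{(L_\tau)}$ immediately after each insertion (this is what makes the threshold test $\nabla l_{j}(\bzeta^{(L_\tau)}) \geq \sqrt{2\tau/m}$ directly compatible with the single-element gain bound of Lemma \ref{lemma:lowerBound}). Everything else is a direct application of that lemma plus a telescoping sum.
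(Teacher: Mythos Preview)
Your proof is correct and follows essentially the same approach as the paper: apply Lemma~\ref{lemma:lowerBound} with $S=\{j_i\}$ to lower bound each single-element gain by $\tau/(m\tilde{C}_1)$, then telescope over the $m$ insertions (using $f(\emptyset)=0$). Your explicit indexing and the remark about the optimizer being refreshed after each insertion are helpful clarifications of what the paper leaves implicit.
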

\begin{proof}[Proof Sketch]
The result follows from Lemma \ref{lemma:lowerBound} and that we add an element $j$ only if $\nabla l_j(\bzetaLt)\ge \sqrt{\frac{2\tau}{m}}$
\end{proof}

\subsection{Choosing Thresholds for ProtoStream}
Recall that the thresholds are searched in the interval $O_{\rho} = [\frac{\rho}{2m}, \frac{\rho m}{2}]$ where the interval length is \emph{independent} of the RSC and RSM parameters and hence readily available. The upper bound $\frac{\rho m}{2}$ on the range of $\tau$ is chosen to guarantee that for any new element $j$, all candidate sets $L_{\tau}$ to which $j$ must be appended when its incremental gain exceeds $\sqrt{\frac{2\tau}{m}}$ are considered and no already seen elements are overlooked that should have been taken for the set $L_{\tau_{new}}$ when instantiating a new $\tau_{new} > \frac{\rho m}{2}$. This is because when $\tau$ is chosen from $O_{\rho}$, every element $j$ that satisfies the threshold criteria to be a part of $L_{\tau}$ will appear on or after $\tau$ is instantiated and never before, as for any past element $j$, $\left[\gradljpbzero\right]^2 \leq \rho < \frac{2\tau_{new}}{m}$ where $\tau_{new}$ is the new value of $\tau$ that may be instantiated after seeing $j$. Ergo, $j \notin L_{\tau_{new}}$. The following insight is useful in motivating our choice for the lower range of $O_{\rho}$. Setting $S = L^{\ast}$ and $L = \emptyset$ in Lemma~\ref{lemma:upperBound} we get
\begin{equation}
\label{eq:OPTUpperBoundWithMaxGrad}
f\left(L^{\ast}\right) \leq \frac{1}{2c_m}\sum\limits_{j \in L^{\ast}} \left[\gradljpbzero\right]^2 \leq \frac{\rho m}{2c_m}
\end{equation}
implying that $\frac{\rho m}{2} \geq c_m f\left(L^{\ast}\right)$. Hence we choose the lower range of $O_{\rho}$ to be the value that lower bounds $c_m f\left(L^{\ast}\right)$. Setting $S$ to be the singleton set $\{p\}$ which has the maximum gradient at $\mathbf{0}$ and $ L = \emptyset$ in Lemma~\ref{lemma:lowerBound} we have
\begin{equation}
\label{eq:OPTLowerBoundWithMaxGrad}
\frac{c_m \rho}{2 \tilde{C}_1} \leq c_m f(\{p\}) \leq c_m f\left(L^{\ast}\right).
\end{equation}

Let us first consider the case where the number of chosen prototypes $m$ is so few that $\frac{c_m}{\tilde{C}_1} \leq \frac{1}{m}$. From Lemma~\ref{lemma:lowerBound} and the inequality in (\ref{eq:OPTUpperBoundWithMaxGrad}) we find
\begin{equation*}
f(\{p\}) \geq \frac{\rho}{2 \tilde{C}_1} \geq \frac{c_m f\left(L^{\ast}\right)}{\tilde{C}_1 m} \geq \frac{c_m^2 f\left(L^{\ast}\right)}{\tilde{C}_1^2}.
\end{equation*}
Hence by just opting for the singleton set $\{p\}$, we obtain a constant factor approximation. In the more interesting case where $\frac{c_m}{\tilde{C}_1} \geq \frac{1}{m}$, (\ref{eq:OPTLowerBoundWithMaxGrad}) implies that $c_m f\left(L^{\ast}\right) \geq \frac{\rho}{2 m}$. Hence we set the range to be $O_{\rho} = [\frac{\rho}{2 m}, \frac{\rho m}{2}]$. Note that for a value $\tau \in O_{\rho} \geq c_m f\left(L^{\ast}\right)$, if $\left|L_{\tau} \right| =m$, then in accordance with Lemma~\ref{lemma:cardinalityMet} we will have $f\left(L_{\tau}\right) \geq \frac{c_m  f\left(L^{\ast}\right)}{\tilde{C}_1}$, resulting in a better constant approximation factor compared to $\frac{c_m}{\tilde{C}_m}$ derived for ProtoBasic as $\tilde{C}_1 \geq \tilde{C}_m$.

\begin{figure*}[t]
  \begin{center}
    \begin{tabular}{ccc}
      \includegraphics[width=0.33\linewidth]{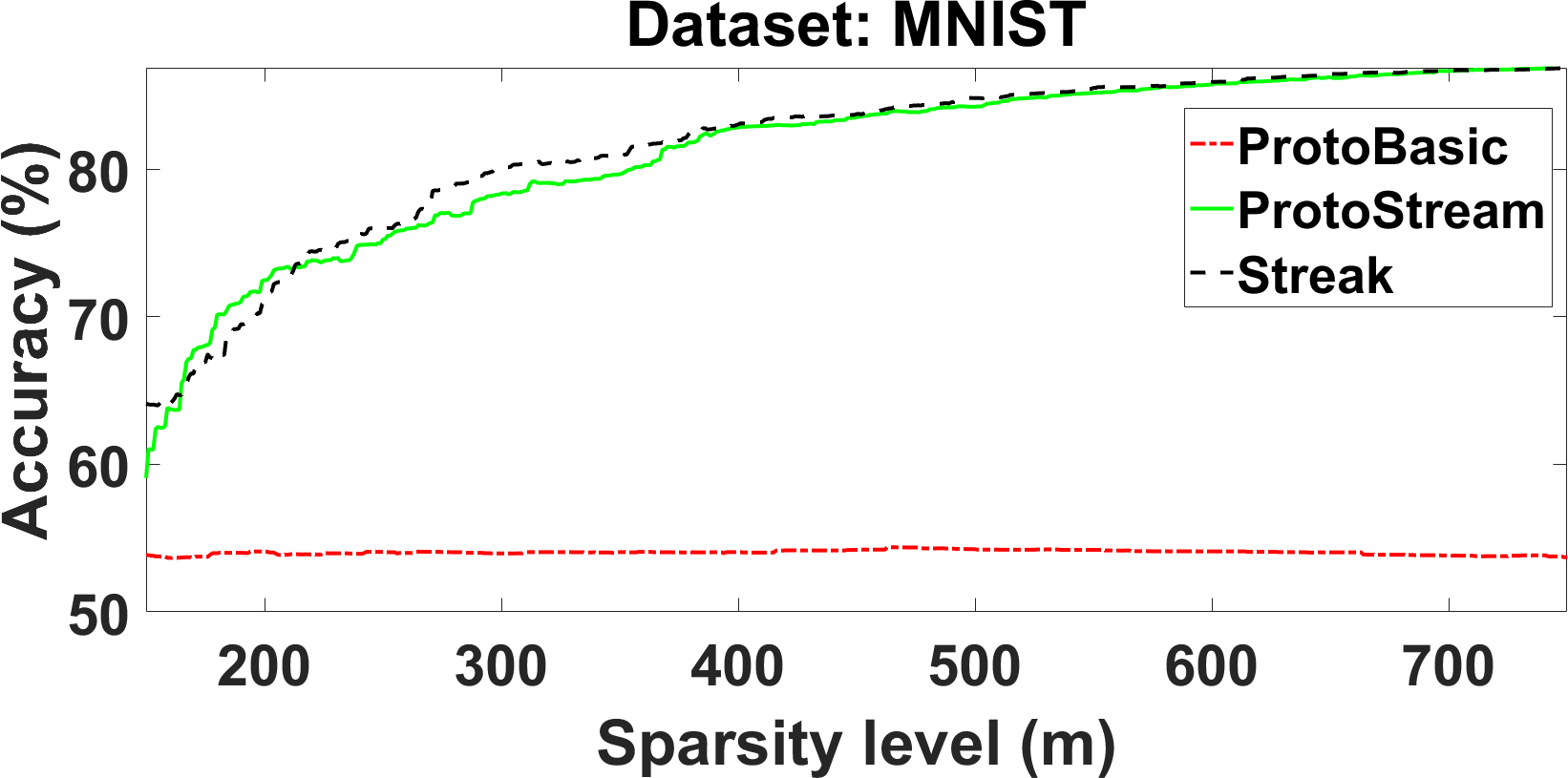} &      \includegraphics[width=0.33\linewidth]{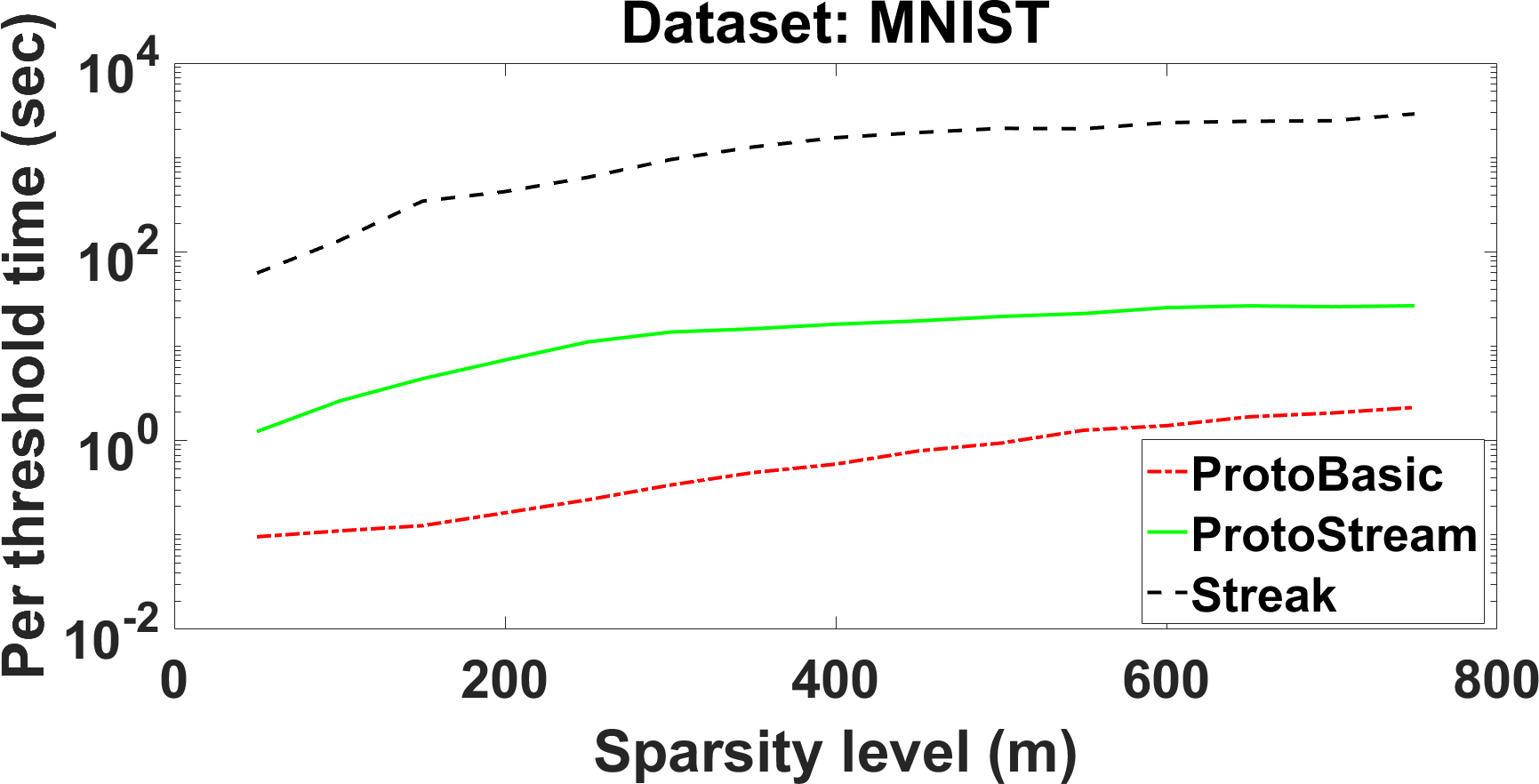} &
      \includegraphics[width=0.33\linewidth]{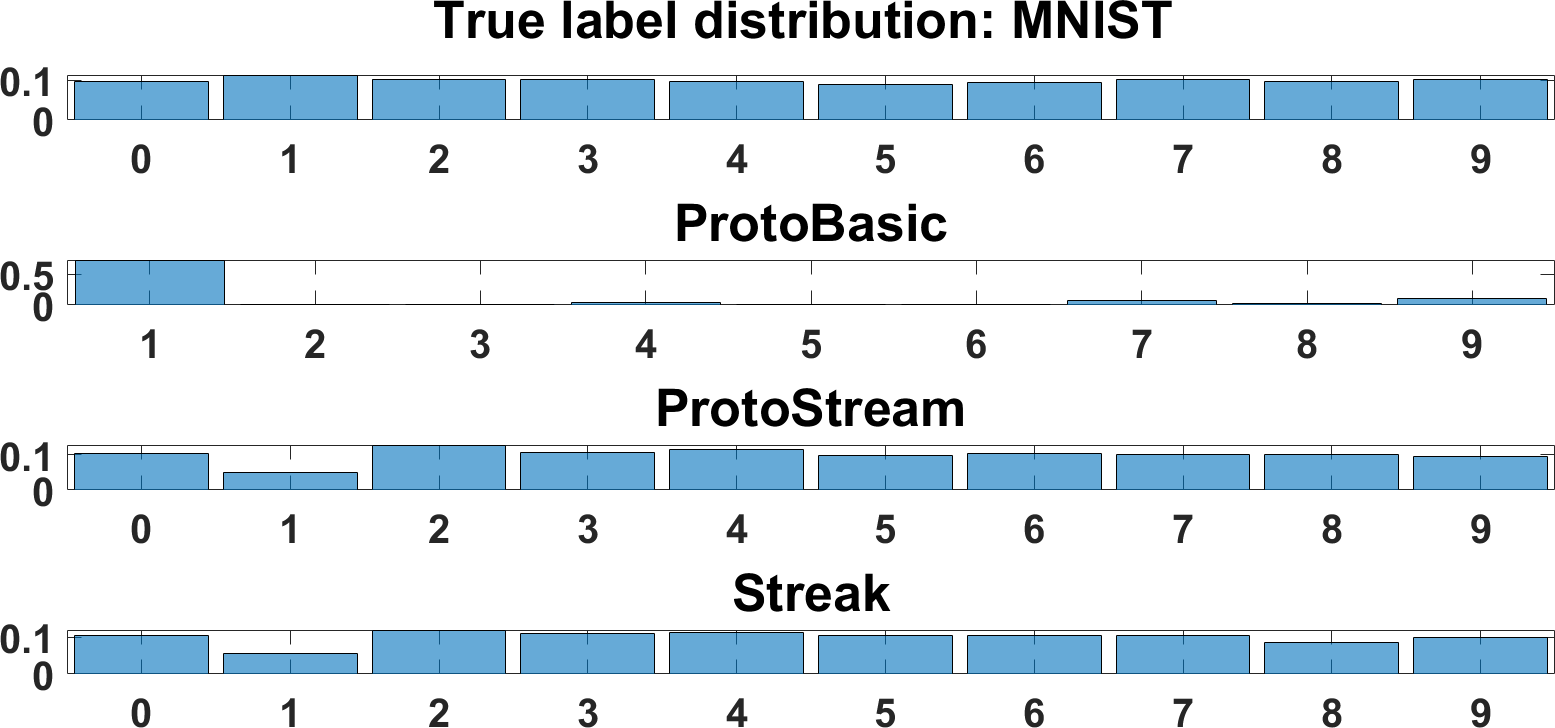}
     \end{tabular}
  \end{center}
  \caption{We observe the performance of the methods on MNIST for different $m$. The left Figure depicts the classification accuracy. The center figure depicts the (per threshold) running time. The right figure depicts the label distribution of the selected prototypes.}
  \label{mnist}
\end{figure*}
\begin{figure*}[t]
  \begin{center}
    \begin{tabular}{ccc}
      \includegraphics[width=0.33\linewidth]{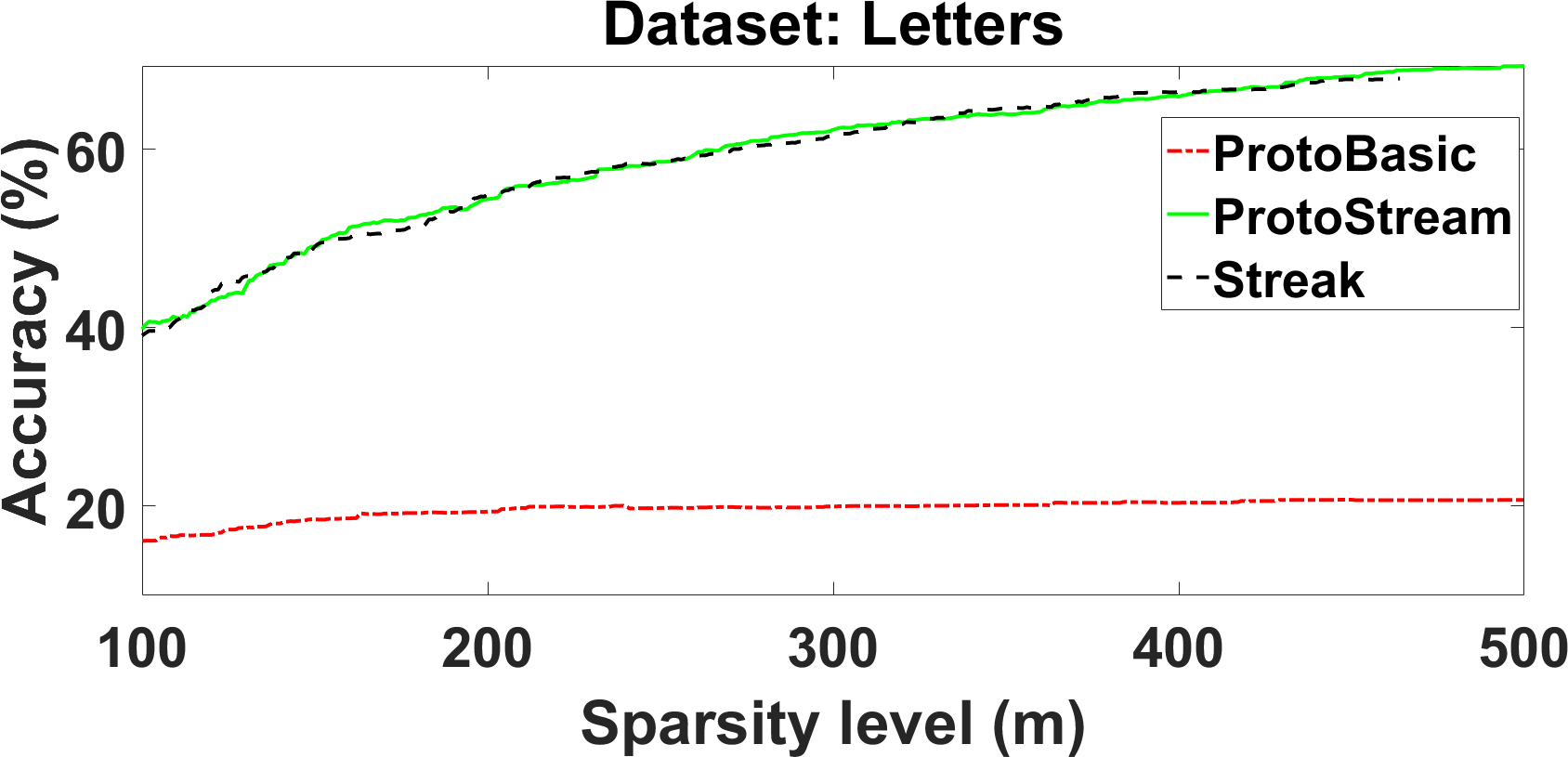} &      \includegraphics[width=0.33\linewidth]{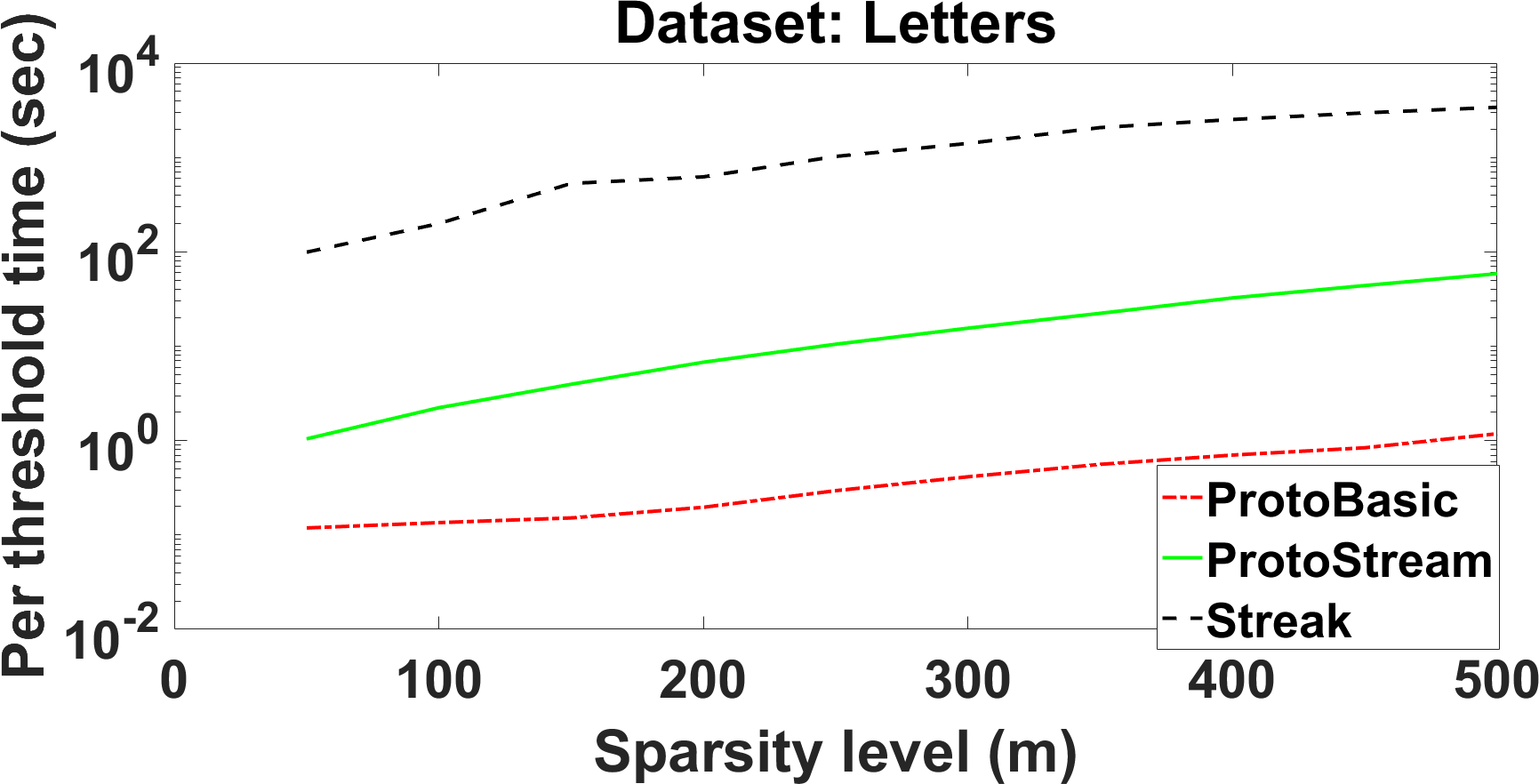} &
      \includegraphics[width=0.33\linewidth]{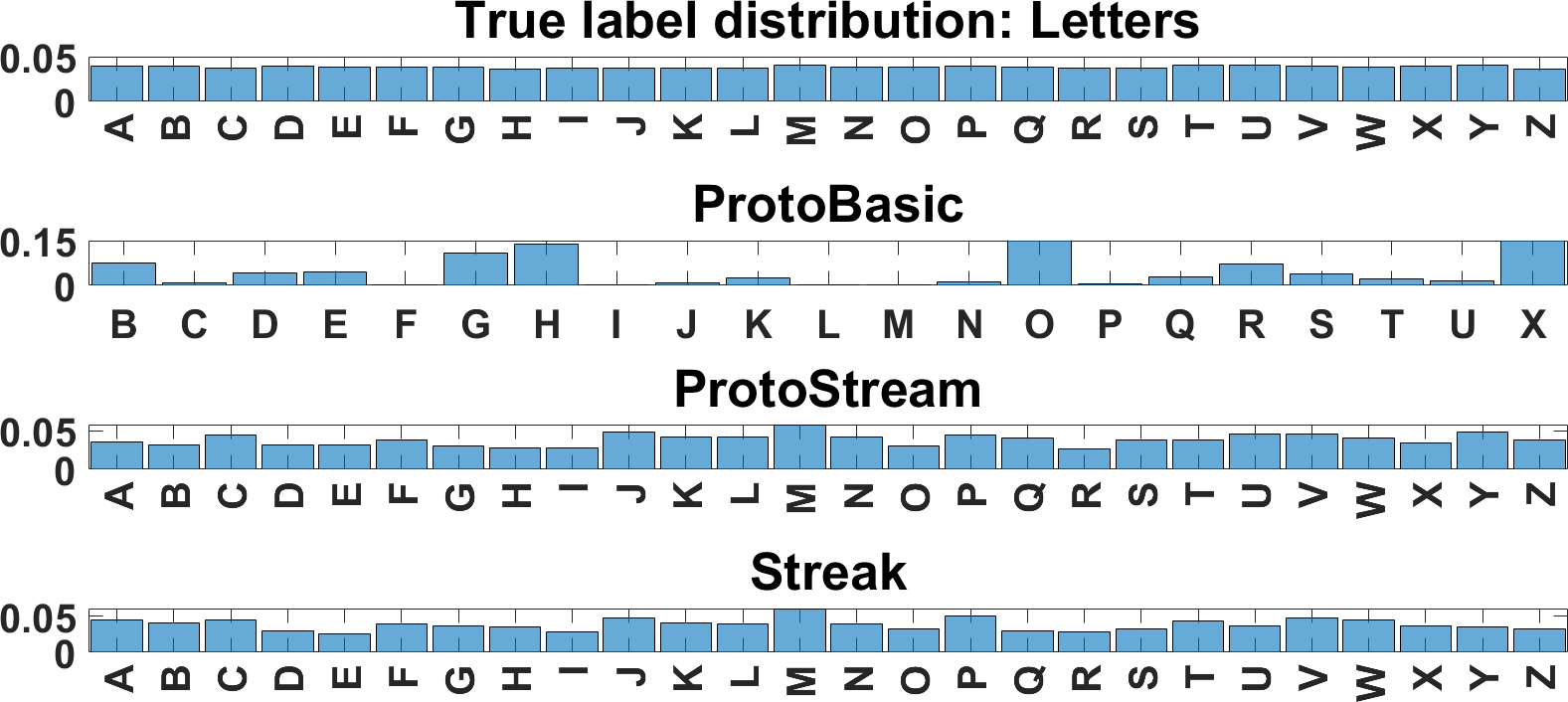}
     \end{tabular}
  \end{center}
  \caption{We observe the performance of the methods on Letters for different $m$. The left figure depicts the classification accuracy. The center figure depicts the (per threshold) running time. The right figure depicts the label distribution of the selected prototypes.}
  \label{letters}
\end{figure*}

\section{Experiments}
\label{sec:experiments}
We now empirically investigate the performance of our algorithms relative to the state-of-the-art Streak algorithm \cite{weakstream} on two real datasets MNIST \cite{mnist} and Letters \cite{letters}. We extract compact synapses on the fly for these datasets of size $n$ by selecting a maximum of $m << n$ prototypes obtained by maximizing the following cost function, which is a reformulation of maximum mean discrepancy metric and has been successfully used to select prototypes in the batch setting \cite{proto,Kim16}:
\begin{equation}
\label{eq:l} 
\mbox{Maximize   }l\left(\w\right) =  \w^T \bmu - \frac{1}{2} \w^T K \w \mbox{   s.t.  } \|\w\|_0 \leq m \mbox{ and } \w \geq 0.
\end{equation}
Here $K$ is the positive definite Kernel matrix with entries $K_{i,j} = k(\x_i,\x_j)$ where $k$ is appropriately chosen kernel function to define the inner products between data samples. The entries of the vector $\boldsymbol \mu$ contains the mean inner product of a data sample with the rest and is defined as $\mu_{j} = \frac{1}{n} \sum\limits_{i=1}^{n} k(\x_i,\x_j); \forall j$. An empirical estimate of $\bmu$ is maintained based on ideas described in \cite{substream} in the experiments. The vector $\w=[w_{1},...,w_{n}]^T$ are the non-negative weights with utmost $m$ entries non-zero which are indicative of the importance of the corresponding prototypes. It was shown in \cite{proto} that the function in equation \ref{eq:l} is RSC and RSM and the corresponding set function $f(.)$ defined as in equation \ref{def:f} is weakly submodular \emph{even} with the non-negativity constraint on the weights. When all weights are set to $1/m$ and only the support set $L$ is unknown, the set function in (\ref{def:f}) can be shown to be (strongly) submodular \cite{Kim16} and for these class of functions, streaming algorithms with constant factor guarantees are developed in \cite{substream}. As \cite{proto} describes in detail the usefulness of having non-equal weights, we consider the more general setting here and apply our streaming algorithms for the same. In all the experiments we use a Gaussian kernel for $k(.,.)$ whose width is found through cross-validation and set $\epsilon = 0.4$ as smaller values didn't improve the objective by much, although significantly slowed down Streak.

For both MNIST and Letters, a (global) 1-nearest neighbor (1-NN) classifier \cite{Kim16} was used to evaluate the efficacy of the selected prototypes. Since the learned weights and the distance metric in 1-NN classification are not the same scale, we performed the standard 1-NN classification based on the top $m$ prototypes selected based on largest weights. 

Additional experiments where the test set is split into multiple target datasets containing only (examples of) a single digit/alphabet, while the training or source dataset remains the same, and we want to evaluate the adaptability of the algorithms to such heavily skewed test distributions, are given in Appendix~\ref{sec:skeweddistributionexp}. We observe in such settings that ProtoBasic is in fact the method of choice.

\begin{table}[t]
\caption{Below we see the total run times (in seconds) and the maximum objective value for $l(w)$ (equation \ref{eq:l}) achieved by the different methods. Best results are highlighted in bold.\label{table:summary}}
\noindent \centering{}%
\begin{tabular}{|c|c|c|c|c|}
\hline
Method&\multicolumn{2}{|c|}{MNIST (m=750)}&\multicolumn{2}{|c|}{Letters (m=500)}\\
\cline{2-5}
&Total run time (s)&Max. $l(w)$ value&Total running time (s)&Max. $l(w)$ value\\
\hline
ProtoBasic&\textbf{2}&0.0587&\textbf{1}&0.0395\\
\hline
ProtoStream&1482&\textbf{0.0705}&338&\textbf{0.0468}\\
\hline
Streak&108341&\textbf{0.0705}&52025&\textbf{0.0468}\\
\hline 
\end{tabular}
\end{table}

\subsection{MNIST}
The MNIST dataset consists of 70000 (60K+10K) handwritten digits. We use the set of size 10000, as the base set from which we choose up to 750 prototypes---since after this the gain to our objective \ref{eq:l} was incremental---and then evaluate it on the remaining 60000 using it as a test set.

We observe in Figure \ref{mnist} (left) that the performance of both Streak and ProtoStream in terms of classification accuracy on the test set are very similar across different values of $m$. ProtoBasic is significantly worse and the reason for this is the lack of diversity in its chosen prototypes as visualized in Figure \ref{mnist} (right). In this plot, we see that the distribution of the 10 digits in the base set is almost uniform, and both Streak and ProtoStream are able to reasonably recover this, however, ProtoBasic ends up selecting just a few digits. This is because ProtoBasic chooses the prototypes only based on their gradient values computed at $\mathbf{0}$, and is non-incremental in the sense that subsequent choices do not depend on which ones have been chosen thus far and hence is unable to create a diverse prototype set. However, both ProtoStream and Streak are incremental methods as the incremental gain for an incoming element depends on the current content of the sets.

In Figure \ref{mnist} (center), we see the main benefit of our methods. We plot the per threshold times as parallelized implementations may be possible for maintaining the different sets and so a comparison on this metric is important. In Table \ref{table:summary}, we see the total run times for a serial implementation of these methods. In both cases we see that \emph{ProtoStream is approximately 2 orders of magnitude faster than Streak}. Moreover, in Table \ref{table:summary} we observe that ProtoStream achieves the same quality solution as Streak, given that the maximum objective value (of equation \ref{eq:l}) is identical for both of them.

The reason for such a wide computational gap is that our algorithms only require gradient evaluations which are about $O(m)$ for each new instance, while Streak performs function evaluations which are $O(m^3)$ for each new instance as (\ref{def:f}) is a quadratic optimization problem. Moreover, while ProtoStream does only $m << n$ function evaluations to recompute the weights $\bzetaLt$ after the addition of an instance to the set $L_{\tau}$, Streak performs $n$ function evaluations per threshold as computing the incremental gain for every element requires such an evaluation.

\subsection{Letters}
The Letters dataset is a UCI repository dataset consisting of 20000 instances of the 26 letters in the alphabet written in 20 different fonts and 5 different styles. There are 16 attributes which encompass statistical moments and edge counts when scanning these letter images in different directions. Typically, the first 16000 instances are used for training and the remaining 4000 are used as test. We selected up to 500 prototypes from the base set of 16000 since the gain based on (\ref{eq:l}) after that was marginal. The selected prototypes were then used to classify the other 4000 using 1-NN classifier. In Figure \ref{letters} (left) we see that the accuracy of ProtoStream is almost indistinguishable from Streak and at times superior for some values of $m$. ProtoBasic, again performs inferiorly due to lack of diversity as elucidated above and is validated in Figure \ref{letters} (right). We again observe in Figure \ref{letters} (center) and Table \ref{table:summary} that our algorithms are orders of magnitude faster than Streak as they do not require evaluation of set function in (\ref{def:f}) for every new instance, albeit that ProtoStream still achieves the same quality (i.e. same max objective value) solution as Streak.

More experiments showcasing the diversity of our selection across fonts and stroke styles are given in Appendix~\ref{sec:fontstrokeexp}.

\section{Discussion}
In summary, we described sufficient conditions for obtaining a constant factor streaming algorithm for weakly submodular functions. Our conditions cover a rich class of functions namely those that are RSC and RSM. As a more general result, we established that any monotonic weakly submodular function with bounded submodularity ratio from above has a streaming algorithm with constant approximation guarantees. We developed an extremely fast threshold free algorithm and a high performing threshold based algorithm that is still orders of magnitude faster than the state-of-the-art at least for quadratic functions over several variables and also closely matches the latter in practical performance. 
In the future, we would like to study how much our conditions can be relaxed to bridge the gap between necessity and sufficiency for the rich class of weakly submodular functions.
\appendix
\section{Proofs}
\label{sec:proofs}
\subsection{Proof of Lemma 5.1}
\begin{proof}
Let $\bonej$ be a vector with a value one only at the $j^{th}$ coordinates and zero elsewhere.  For all $\alpha_j \geq 0$, define $\byS = \bzetaL + \sum\limits_{j \in S} \alpha_j \bonej$.  As $\bzetaLS$ is the optimal point for $f\left(L \cup S \right)$ we have
\begin{align}
\lbzetaLS - \lbzetaL &\geq l\left(\byS\right) - \lbzetaL \nonumber \\
 \label{eq:lowerbound}
&\geq  \left\langle\nabla \lbzetaL,  \sum\limits_{j \in S} \alpha_j \bonej \right\rangle - \frac{\tilde{C}_k}{2} \sum\limits_{j \in S} \alpha_j^2.
\end{align}

Maximizing w.r.t. each $\alpha_j$, we get $\alpha_j = \frac{\gradljpbzetaL}{\tilde{C}_k}$ where $\gradljpbzetaL = \max \left(\gradljbzetaL ,0 \right)$. Substituting these values of $\alpha_j$ in (\ref{eq:lowerbound}) gives us the required lower bound, namely
\begin{align}
\label{eq:lowerBfinal}
\lbzetaLS - \lbzetaL \geq \frac{1}{2 \tilde{C}_k} \left\|\nabla l_{S}^+\left(\bzetaL \right) \right\|^2.
\end{align}
\end{proof}

\subsection{Proof of Lemma 5.2}
\begin{proof}
By the definition of $RSC$ constant $c_k$ we find
\begin{align}
& \lbzetaLS -  \lbzetaL \leq \left\langle\nabla \lbzetaL, \bzetaLS-\bzetaL \right\rangle - \frac{c_k}{2} \left\|\bzetaLS-\bzetaL\right\|^2 \nonumber \\
 \label{eq:upperbound}
& \leq \max\limits_{\bv: \bv_{(L\cup S)^c}=0, \bv >=0} \left\langle\nabla \lbzetaL, \bv-\bzetaL \right\rangle - \frac{c_k}{2} \left\|\bv-\bzetaL\right\|^2.
\end{align}
Observe that the KKT conditions at the optimum $\bzetaL$ for the function $f(L)$ necessitates that $\forall j \in L$,
\begin{align*}
\bzetaL_j > 0 &\implies \gradljbzetaL = 0, \\
\bzetaL_j = 0 &\implies \gradljbzetaL \leq 0
\end{align*}
and hence we have $\bv_j = \bzetaL_{j}$. When  $j \in S$, $\bzetaL_j=0$, and maximizing w.r.t. $\bv_j$, the maximum occurs at $\bv_j = \frac{\gradljpbzetaL}{c_k}$ where $\gradljpbzetaL = \max \left(\gradljbzetaL ,0 \right)$. Plugging this maximum value of $\bv$ in (\ref{eq:upperbound}) we get the upper bound
\begin{equation}
\label{eq:upperBfinal}
\lbzetaLS-\lbzetaL  \leq \frac{1}{2 c_k} \left\|\nabla l_{S}^+\left(\bzetaL \right) \right\|^2.
\end{equation}
\end{proof}

\subsection{Proof of Theorem 5.3}
\label{sec:PrDashnonth_Guarantees}
Setting $L=\emptyset$ in Lemma~5.1 we get
\begin{align}
\label{eq:DInequality}
f(S) &\geq \frac{\left\|\nabla l_{S}^+\left(\mathbf{0} \right) \right\|^2}{2 \tilde{C}_m} \geq \frac{\left\|\nabla l_{L^{\ast}}^+\left(\mathbf{0} \right) \right\|^2}{2 \tilde{C}_m} \geq \frac{c_m f(L^{\ast})}{\tilde{C}_m}.
\end{align} 
The second inequality follows from the fact $S$ contains the elements that maximizes the gradient values $\nabla \lbzero$. The third inequality is obtained by setting $S = L^{\ast}$ and $L = \emptyset$ in Lemma~5.2.
Setting $\kappa = \frac{c_m}{\tilde{C}_m}$ we obtain a \emph{constant approximation} of $f\left(S\right) \geq \kappa  f\left(L^{\ast}\right)$.

\subsection{Proof of Lemma 5.4}
Recall that given two disjoint sets $L$ and $S$, the submodularity ratio is defined as
\begin{equation*}
\gamma_{L,S} = \frac{\sum\limits_{j \in S} \left[f(L \cup \{j\}) - f(L)\right]} {f(L \cup S) - f(L)}.
\end{equation*}
where $f(L) = \lbzetaL$ and $f(L \cup S) = \lbzetaLS$.  Using inequalities (\ref{eq:lowerBfinal}) and (\ref{eq:lowerBfinal}) we can bound the submodularity ratio as
\begin{equation}
\label{eq:submodbounds}
\frac{c_{|L|+|S|}}{\tilde{C}_1} \leq \gamma_{L,S} \leq \frac{\tilde{C}_{|S|}}{c_{|L|+1}}. 
\end{equation}

\subsection{Proof of Theorem 5.5}
\label{sec:PrGreedyGuarantees}
As the set $S$ consists of those $m$ elements where the function evaluation on the singleton sets is the maximal, we have $\forall j \in S$; $ f(\{j\}) \geq f(\{p\})$; $p \notin  S$. When compared with the optimal set $L^{\ast}$ we find
\begin{align*}
f(S) = \frac{\sum\limits_{j \in S} f(\{j\})}{\gamma_{\emptyset,S}}  &\geq \frac{1}{R_m} \left[\sum\limits_{p \in L^{\ast}} f\left(\{p\}\right)\right] =\frac{\gamma_{\emptyset,L^{\ast}}}{R_m} f\left(L^{\ast}\right) \geq \frac{r_m}{R_m} f\left(L^{\ast}\right).
\end{align*}
Thus $f(S) \geq \kappa f\left(L^{\ast}\right)$ where $\kappa = \frac{r_m}{R_m}$.

\subsection{Proof of Lemma 5.6}
Recall that an incoming element $j$ is added to the set $L_{\tau}$ provided 
\begin{equation}
\label{eq:thcriteria}
\gradljbzetaLtau \geq \sqrt{\frac{2\tau}{m}}.
\end{equation}
By setting $S$ to be singleton set $\{j\}$ in Lemma~5.1 we get 
\begin{equation*}
f\left(L_{\tau} \cup \{j\}\right) - f\left(L_{\tau}\right) \geq \frac{1}{2 \tilde{C}_1} \left[\gradljpbzetaLtau \right]^2 \geq \frac{\tau}{\tilde{C}_1 m}.
\end{equation*}
So by adding $\{j\}$ to the current set $L_{\tau}$, the increase the set function is at least $\frac{\tau}{\tilde{C}_1 m}$. When $\left|L_{\tau}\right| = m$, it follows that $f\left(L_{\tau}\right) \geq \frac{\tau}{\tilde{C}_1}$.

\section{Additional Experiments}
\label{sec:additionalexp}
Here we report additional experiments that further underscores the usefulness of our algorithm .
\subsection{Letters: Fonts and Stroke Styles}
\label{sec:fontstrokeexp}
As mentioned in Section 6.2, we know that the letters dataset spans 20 different fonts and 5 different stroke styles. It has been known from previous studies \cite{Fern04, letters} that one could cluster any letter into 20 groups and partition based on the fonts. Analogously clustering into 5 groups can largely uncover the different stroke styles. 

Given this we wanted to see if our prototypes from ProtoStream span the different fonts and styles. Since the partitions are not given we perform k-means clustering and partition copies of each letter into 20 and then 5 groups. We assigned each of our 500 prototypes to the closest cluster based on euclidean distance. We then plotted a histogram of what fraction of instances belonged to which cluster. We also compared this with assignment to randomly formed clusters so as to verify that the clustering in fact had some information.

These results are seen in Figures~\ref{font} and \ref{style}. The more uniform the distribution the better. We see clearly that our prototypes are quite equitably distributed across the different clusters with being much superior than random. This implies two things. First, that the clusters do capture information of possibly fonts and styles. Secondly, our prototypes nicely span these fonts and styles again verifying that ProtoStream selects diverse informative instances.

\begin{figure*}[t]
  \begin{center}
    \begin{tabular}{cc}
      \includegraphics[width=0.49\linewidth]{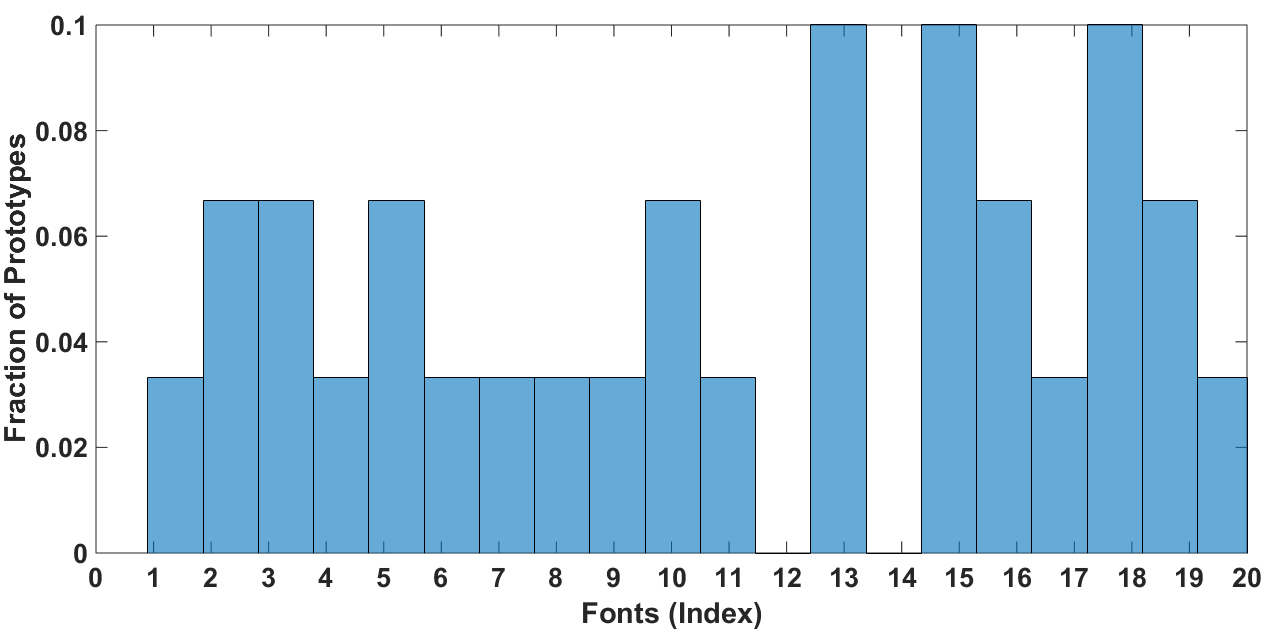} &      
      \includegraphics[width=0.49\linewidth]{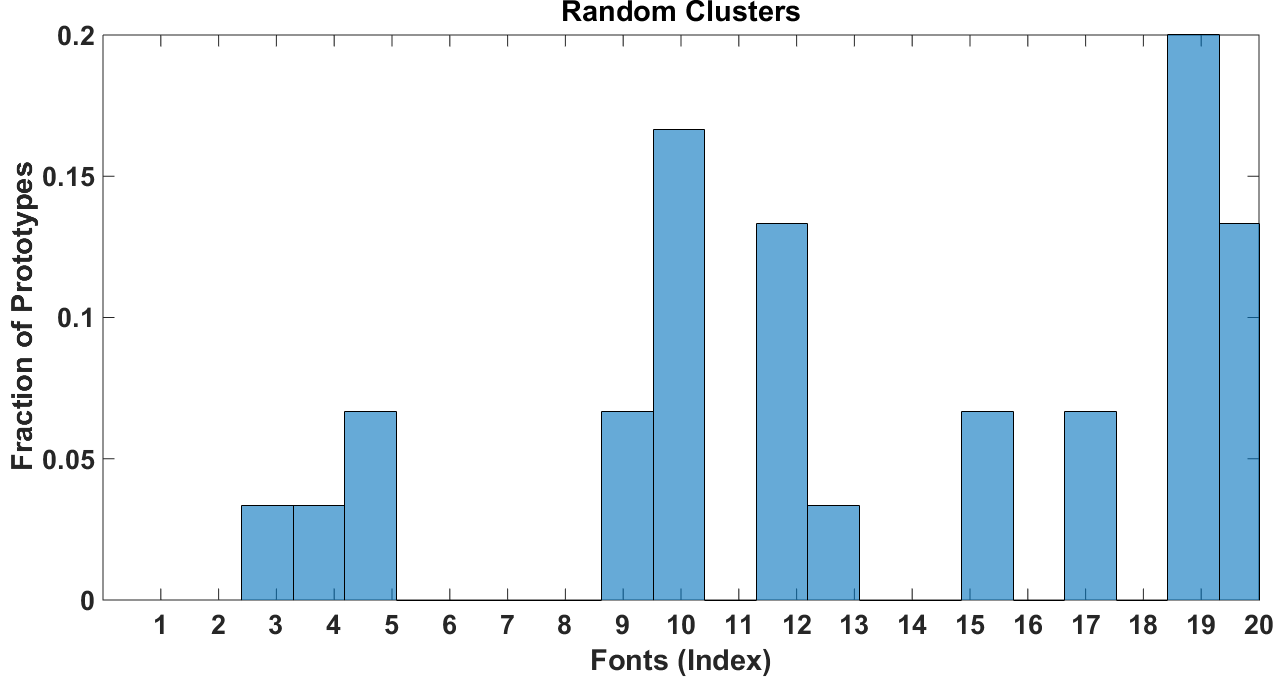}
     \end{tabular}
  \end{center}
  \caption{Above (left) we see the distribution of our selected prototypes across 20 clusters each associated with a different font. The right figure depicts the distribution when we form random clusters.}
  \label{font}
\end{figure*}

\begin{figure*}[t]
  \begin{center}
    \begin{tabular}{cc}
      \includegraphics[width=0.49\linewidth]{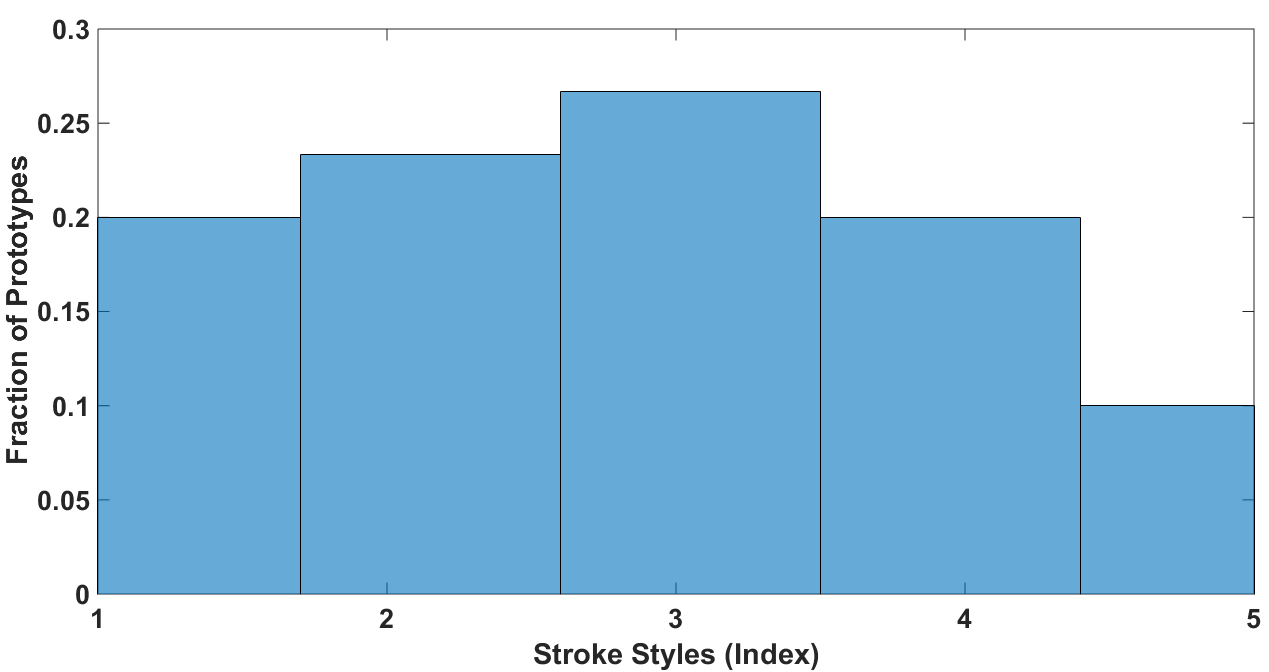} &      
      \includegraphics[width=0.49\linewidth]{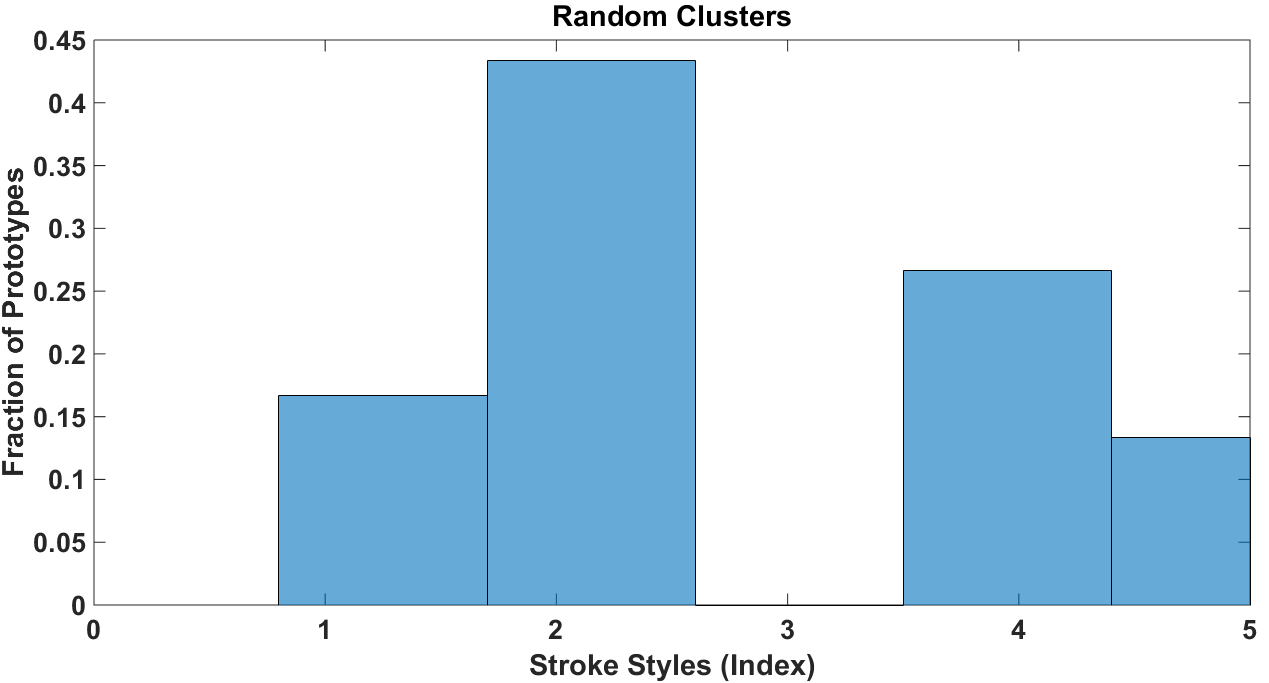}
     \end{tabular}
  \end{center}
  \caption{Above (left) we see the distribution of our selected prototypes across 5 clusters each associated with a different stroke style. The right figure depicts the distribution when we form random clusters.}
  \label{style}
\end{figure*}
\subsection{Adapting to Target Dataset}
\label{sec:skeweddistributionexp}
The plots in Figures~\ref{mnist} and \ref{letters} indirectly appraise the quality of the selected prototypes based on their accuracy in classifying a test set. In this section we design experiments from which we can directly infer the prototype selection quality by studying how well our algorithms adapt to a different test or target distribution. To this end, we create target datasets having samples only from a single class (digits in MNIST and letters in UCI). For example, we create a target dataset for the digit 1 by selecting only $1's$ from the original test of 60000. Given the original source dataset $X^{(2)}$ which contain (almost) an equal mix of different digits or letters, the goal is to see how well our algorithms adapt to these heavily skewed target distributions $X^{(1)}$ that contain only a single digit/alphabet. In other words, we wish to evaluate whether they still just pick a uniform distribution over all the digits/letters from  $X^{(2)}$ or adapt and pick more prototypes of the target digit. Selecting prototypes from one source set that matches well with a different target distribution are natural in covariate shift correction settings \cite{Agarwal11}. 

For going across datasets, we optimize the cost function:	
\begin{equation}
\label{eq:lacross} 
\mbox{Maximize   }l\left(\w\right) =  \w^T \bmu - \frac{1}{2} \w^T K \w \mbox{   s.t.  } \|\w\|_0 \leq m \mbox{ and } \w \geq 0
\end{equation}
where as before $K$ is the positive definite Kernel matrix with entries $K_{i,j} = k(\x_i,\x_j), \forall \x_i, \x_j \in X^{(2)}$ and the entries of the vector $\boldsymbol \mu$ contains the mean inner product of a data sample in $X^{(2)}$ with the target $X^{(1)}$ and is given by: $\mu_{j} = \frac{1}{n^{(1)}} \sum\limits_{i=1}^{n^{(1)}} k(\y_i,\x_j); \forall \x_j \in X^{(2)}$. Here $n^{(1)} = \left|X^{(1)}\right|$. Note that the labels of the target samples are not exposed to the algorithms.  The prototype selection quality can be quantified from the percentage of selected prototypes that match target class. Higher the percentage, better is the selection quality.

We see in Figure~\ref{MNISTskewd} that our algorithms along with Streak do adapt to the target distribution. In fact, ProtoBasic almost exclusively picks examples of the target digit in MNIST showcasing its effectiveness in such a setting. The relative running times are similar to those reported in the main document. Given this, ProtoBasic could be the most preferred method in scenarios where the target dataset more or less contains a single class.

\begin{figure*}[t]
  \begin{center}
    \begin{tabular}{ccc}
      \includegraphics[width=0.33\linewidth]{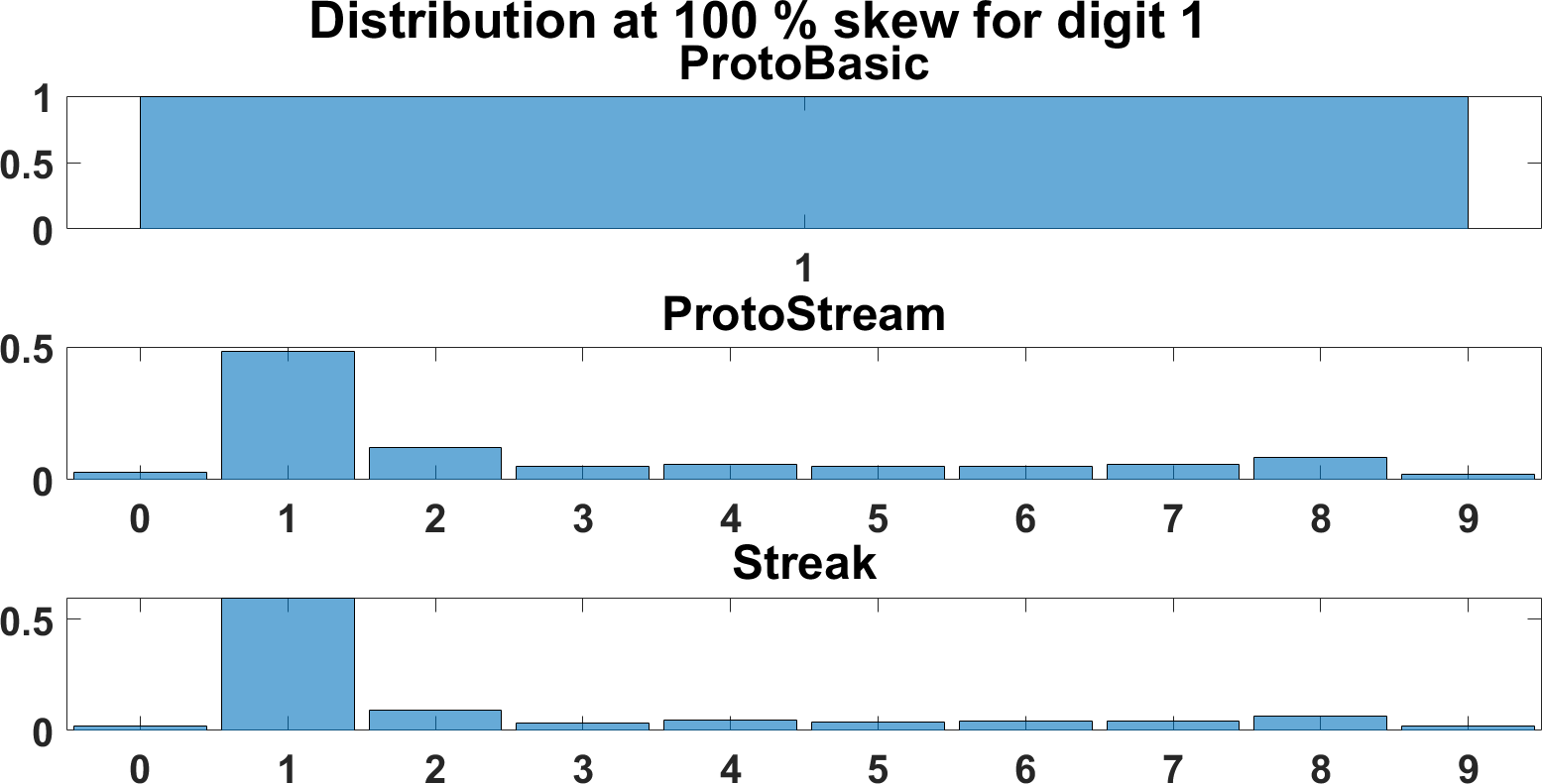} &      
      \includegraphics[width=0.33\linewidth]{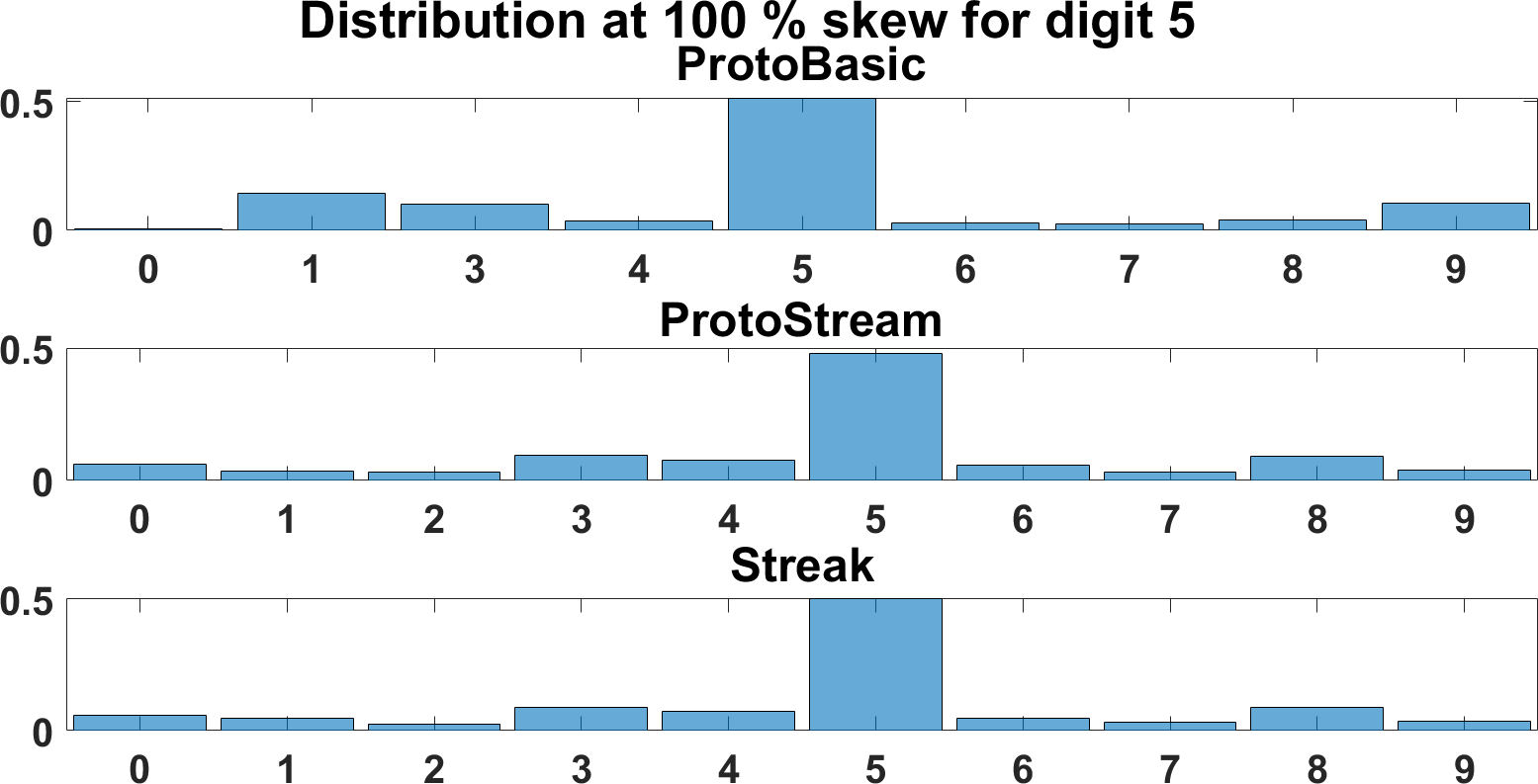} &
		\includegraphics[width=0.33\linewidth]{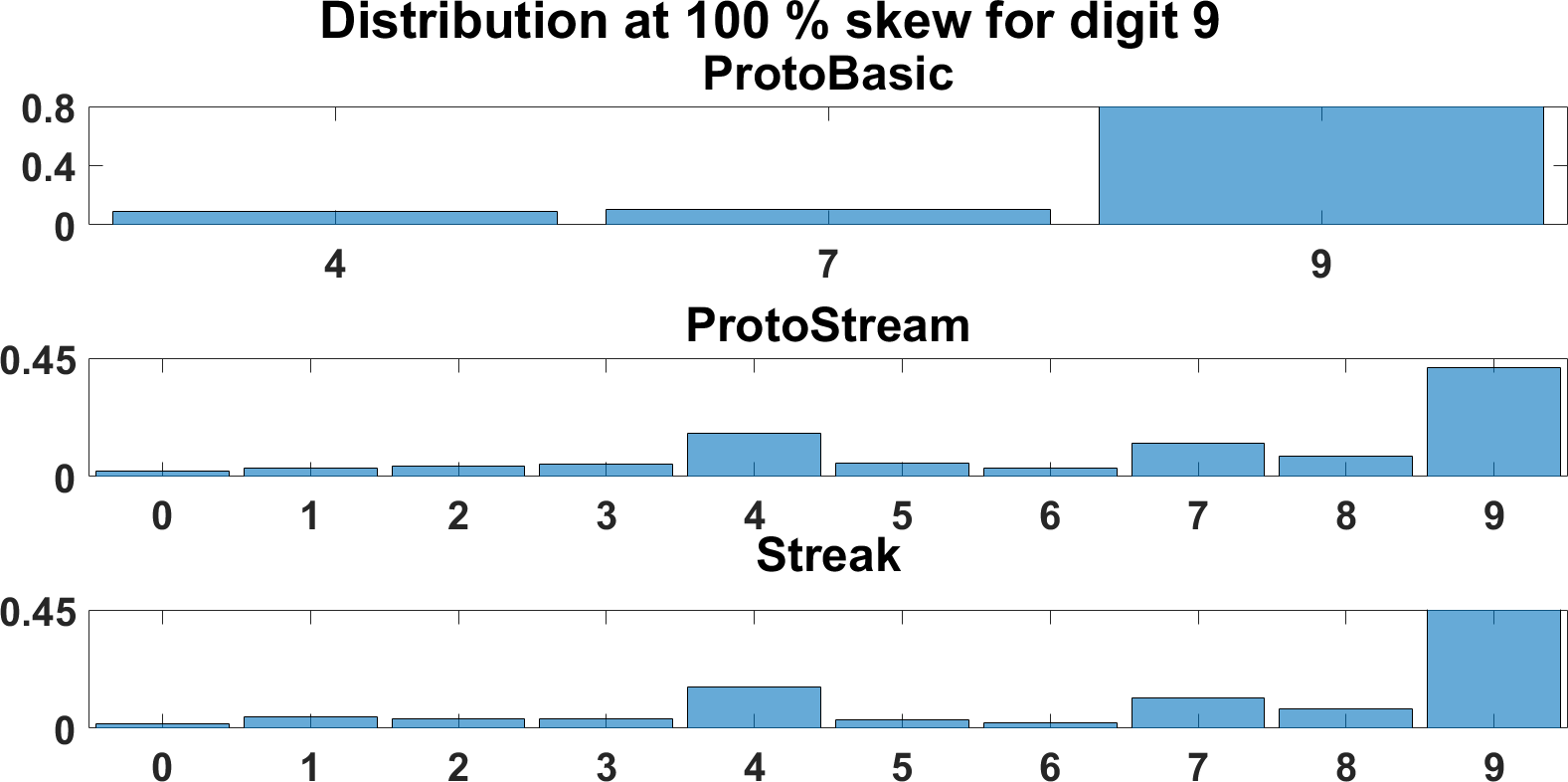} \\
 		\includegraphics[width=0.33\linewidth]{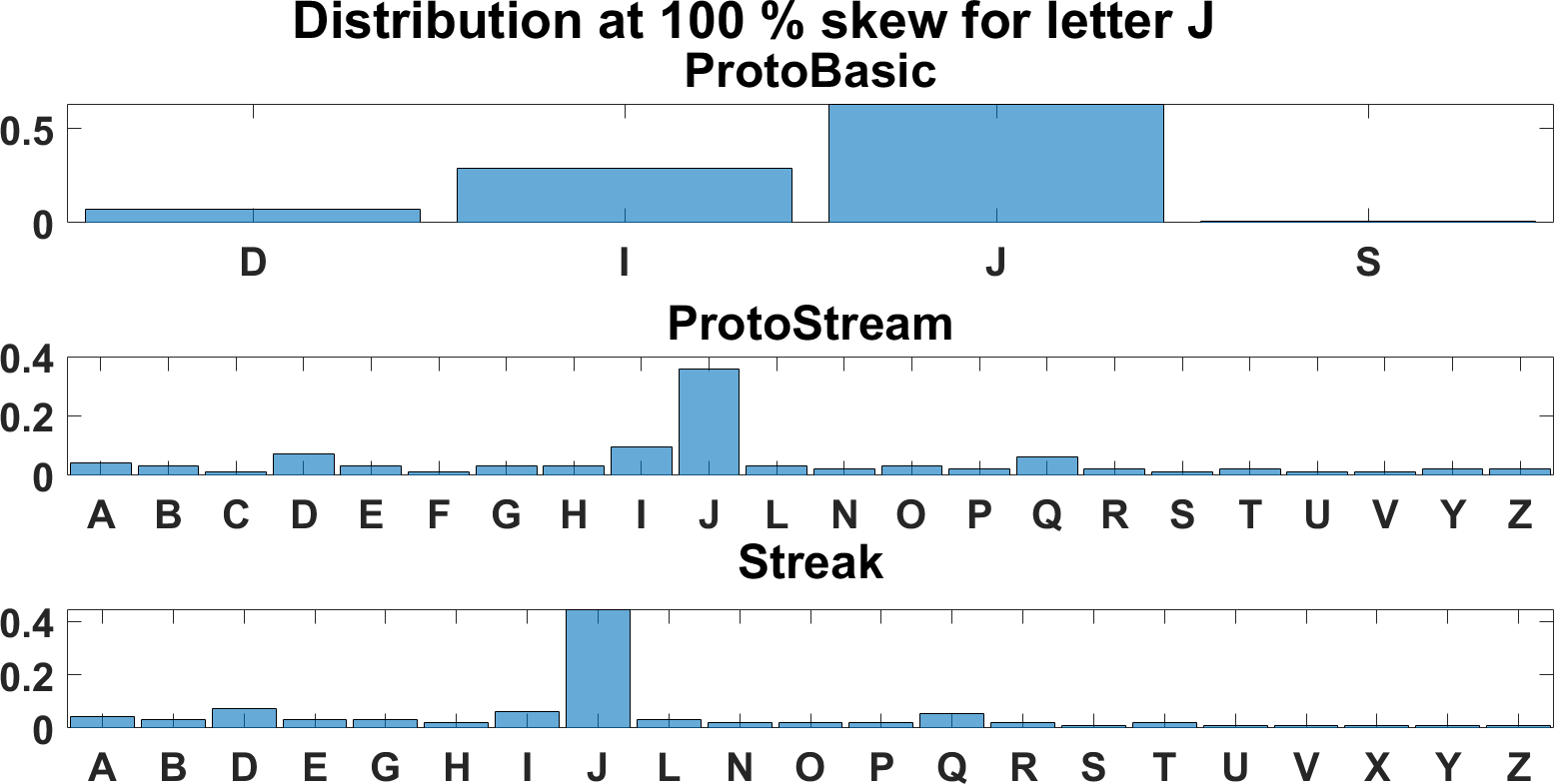} &      
      \includegraphics[width=0.33\linewidth]{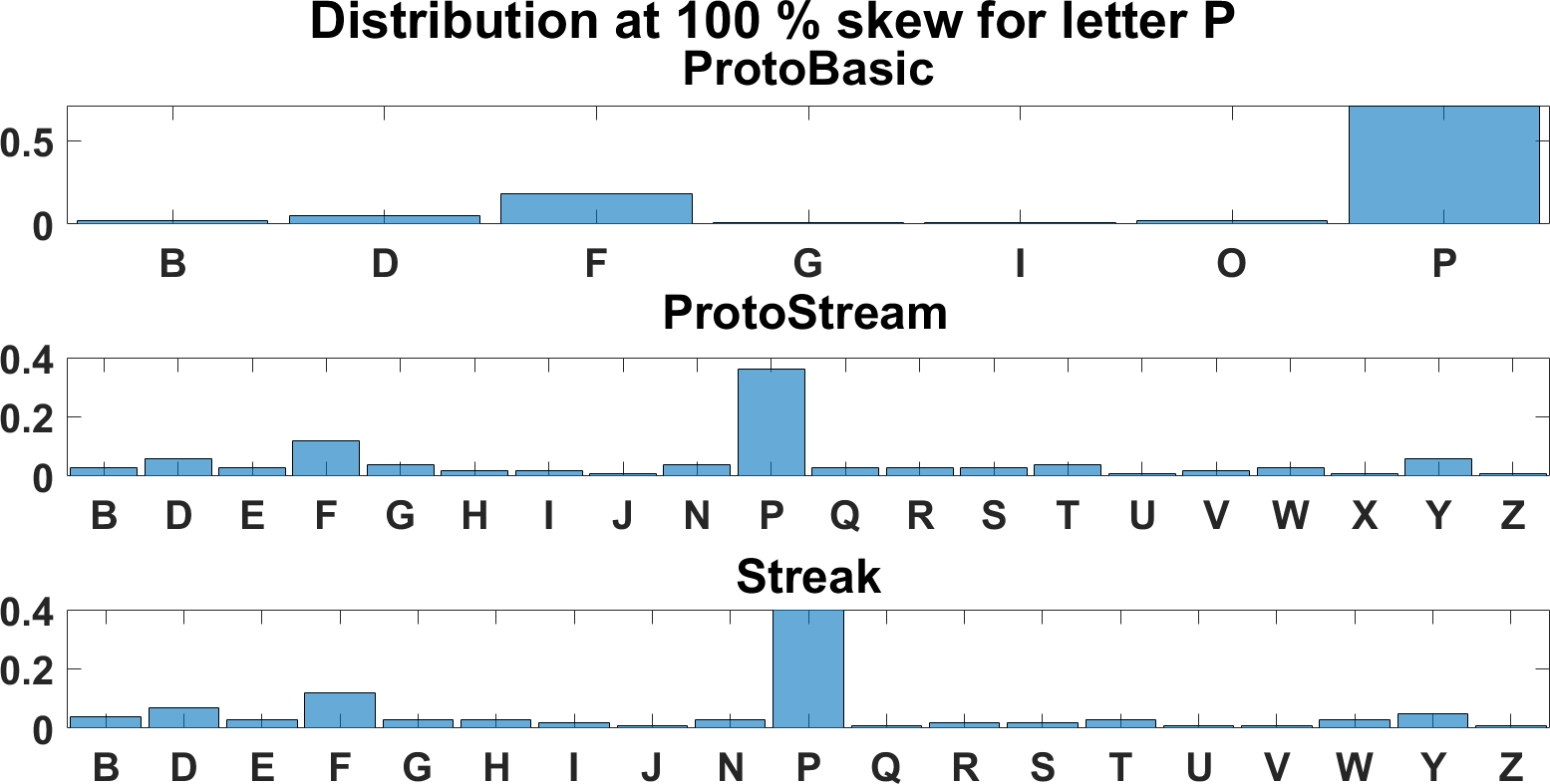} &
		\includegraphics[width=0.33\linewidth]{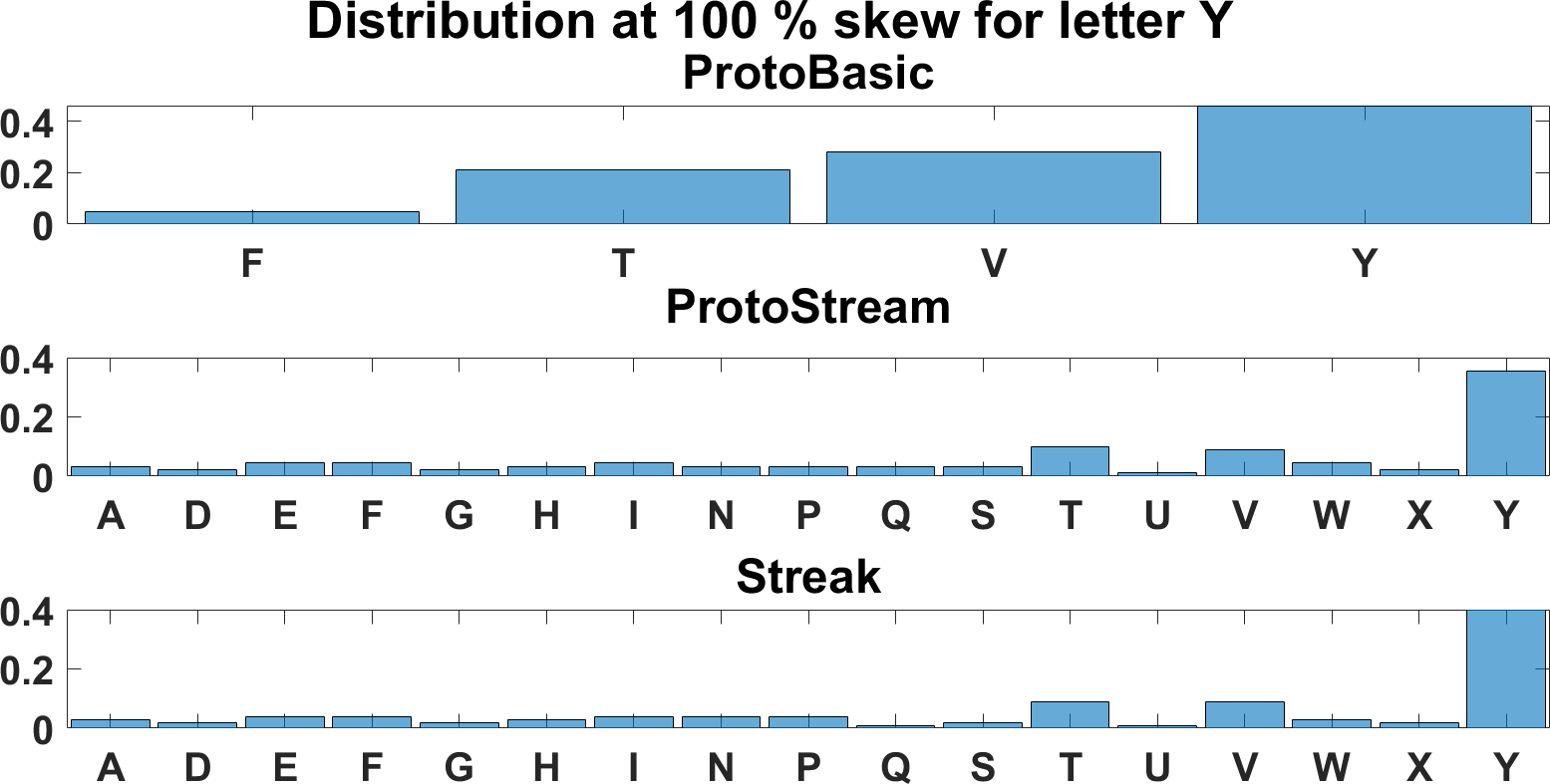} \\
     \end{tabular}
  \end{center}
  \caption{Top: Distribution of MNIST digits picked up from the source dataset when the target dataset contains images of just the digit 1 (left), 5 (center) and 9 (right). Bottom: Distribution of UCI letters picked up from the source dataset when the target dataset contains images of just the letters J (left), P (center) and Y (right). In this case ProtoBasic, is competitive (if not better) with other methods as most of the chosen prototypes are from the target class.}
  \label{MNISTskewd}
\end{figure*}

\bibliography{protostream}
\bibliographystyle{abbrv}
\end{document}